\newcommand{\algname}[1]{{\color{black}\small \sf #1}}
\newtheorem{theorem}{Theorem}
\newtheorem{lemma}{Lemma}
\newtheorem{corollary}{Corollary}
\newtheorem{assumption}{Assumption}
\theoremstyle{definition}
\newtheorem{definition}{Definition}
\newtheorem{remark}{Remark}
\newcommand{\cmark}{\ding{51}}
\newcommand{\xmark}{\ding{55}}
\newcommand{\sz}[1]{\left|#1\right|}
\newcommand{\norm}[1]{\left\lVert#1\right\rVert}
\definecolor{lightGreen}{RGB}{150,255,150}
\begin{document}

\title{Byzantine-Robust and Communication-Efficient Distributed Learning via Compressed Momentum Filtering}

\author{Changxin Liu, \IEEEmembership{Member, IEEE}, Yanghao Li, Yuhao Yi, and Karl H. Johansson, \IEEEmembership{Fellow, IEEE}
	%
	%
	\thanks{C. Liu and K. H. Johansson are with the School of Electrical Engineering and Computer Science, and Digital Futures, KTH Royal Institute of Technology, Sweden (Email: {\tt\small changxin, kallej@kth.se}).
		
		Y. Li and Y. Yi are with the College of Computer Science, Sichuan University, China (Email: {\tt\small li15583209175@gmail.com, yuhaoyi@scu.edu.cn}). Y. Yi is also with Institute of Clinical Pathology, West China Hospital, Sichuan University.
  
  }
	%
}


\markboth{Journal of \LaTeX\ Class Files,~Vol.~14, No.~8, August~2021}%
{Shell \MakeLowercase{\textit{et al.}}: A Sample Article Using IEEEtran.cls for IEEE Journals}


\maketitle

\begin{abstract}
Distributed learning has become the standard approach for training large-scale machine learning models across private data silos. While distributed learning enhances privacy preservation and training efficiency, it faces critical challenges related to Byzantine robustness and communication reduction. Existing Byzantine-robust and communication-efficient methods rely on full gradient information either at every iteration or at certain iterations with a probability, and they only converge to an unnecessarily large neighborhood around the solution. Motivated by these issues, we propose a novel Byzantine-robust and communication-efficient stochastic distributed learning method that imposes no requirements on batch size and converges to a smaller neighborhood around the optimal solution than all existing methods, aligning with the theoretical lower bound. 
Our key innovation is leveraging Polyak Momentum to mitigate the noise caused by both biased compressors and stochastic gradients, thus defending against Byzantine workers under information compression. 
We provide proof of tight complexity bounds for our algorithm in the context of non-convex smooth loss functions, demonstrating that these bounds match the lower bounds in Byzantine-free scenarios. Finally, we validate the practical significance of our algorithm through an extensive series of experiments, benchmarking its performance on both binary classification and image classification tasks.
\end{abstract}

\begin{IEEEkeywords}
Distributed machine learning, Byzantine robustness, communication compression, cyber-physical systems.
\end{IEEEkeywords}

\section{Introduction}


Data is crucial for machine learning, but traditional centralized data processing has become unfeasible due to privacy concerns and regulations, such as the General Data Protection Regulation (GDPR) in Europe. These challenges have necessitated a paradigm shift in how data is handled and processed for machine learning purposes. In response, the field has seen the development and widespread adoption of distributed learning algorithms
\cite{mcmahan2017communication,kairouz2021advances}. 
In this paradigm, participating machines collaborate with cloud-based computing platforms via communication networks to train high-performance machine learning models without disclosing local private data.

While distributed learning enhances privacy preservation and training efficiency, it faces two critical challenges. First, distributed learning systems, a special case of cyber-physical systems (CPS), are inherently vulnerable to misbehaving (a.k.a., {Byzantine}) workers \cite{liu2024survey}. Studies such as \cite{NEURIPS2019_ec1c5914,so2020byzantine,KHJ22,yang2024byzantine} have reported that a few Byzantine machines can severely degrade model performance by transmitting falsified information. Second, the cost of communicating dense gradient vectors poses a significant bottleneck. In many practical applications, communication takes much more time than computation, hindering the overall efficiency of the training system \cite{seide20141,alistarh2017qsgd}.

Motivated by these pressing needs,
the concepts of \textit{Byzantine robustness} and \textit{communication reduction} in privacy-preserving distributed learning have recently garnered increasing attention. 
In the literature, Byzantine robustness has primarily been pursued through the design of robust aggregation rules, such as coordinate-wise median (\algname{CWMed}) \cite{yin2018byzantine}, which allow the server to filter out information from potential Byzantine workers. Integrating these rules into various standard distributed learning algorithms has been extensively studied, resulting in Byzantine-robust learning algorithms with varying robustness guarantees \cite{yin2018byzantine,KHJ21,el2021distributed,FGGPS22}.
For reducing communication overhead, a leading strategy is compression, where dense vectors such as stochastic gradients, model parameters, or Hessians are compressed or sparsified before transmission \cite{seide20141,alistarh2018convergence,stich2018sparsified}.

Although Byzantine robustness and communication reduction have been widely researched, most studies address them in isolation, with few approaches integrating both aspects \cite{bernstein2018signsgd,ghosh2021communication,acharya2022robust,gorbunov2022variance,zhu2023byzantine_TSP,rammal2023communication}. 
Notably, \cite{ghosh2021communication,rammal2023communication} considered general biased contractive compressors but assumed full gradient information. In contrast, \cite{gorbunov2022variance} and \cite{zhu2023byzantine_TSP} partially relaxed the requirement for full gradient information but required the compression operator to be unbiased.
When training machine learning models on large datasets, it is crucial to account for the practical scenario involving both \textit{batch-free stochastic gradients} and \textit{biased contractive compressors} such as $\text{Top}_k$. However, integrating these elements presents significant challenges, as both biased compression and stochastic gradients introduce noise, complicating the defense against Byzantine attacks.




\subsection{Related Works}

\subsubsection{Byzantine-robust distributed learning} A distributed learning algorithm is considered Byzantine-robust if the model's performance remains accurate even in the presence of Byzantine workers, which may behave arbitrarily. The pipeline for Byzantine-robust distributed learning typically consists of three key steps \cite{guerraoui2024byzantine}: \textit{i)} pre-processing of vectors from workers (e.g., models or stochastic gradients), \textit{ii)} robust aggregation of these vectors, and \textit{iii)} application of an optimization method. Existing works in this field often differ in their approach to one or more of these steps. For pre-processing, strategies such as bucketing \cite{KHJ22} and Nearest Neighbor Mixing (\algname{NNM}) \cite{AFGGPS23} have been proposed. Robust aggregation rules include \algname{CWMed}, coordinate-wise trimmed mean (\algname{CWTM}) \cite{yin2018byzantine}, centered clipping \cite{KHJ21}, and minimum enclosing ball with outliers \cite{yi2024near}. 
A Byzantine attack identification strategy has also been shown to strengthen robustness in distributed computing for matrix multiplication tasks \cite{hong2022hierarchical}.
Various base optimization algorithms have been employed, including \algname{SGD} \cite{yang2021basgd}, Polyak Momentum \cite{el2021distributed,KHJ22,FGGPS22}, \algname{SAGA} \cite{wu2020federated}, and \algname{VR-MARINA} \cite{gorbunov2022variance}. 

The best achievable accuracy of Byzantine-robust distributed learning algorithms is inherently limited by the characteristics of the training datasets provided by honest workers. In \textit{heterogeneous} setups \cite{LXCGL19,DD21}, where data distributions across workers may not accurately represent the overall population, converging to the exact optimal model becomes theoretically impossible. This limitation arises because Byzantine workers can exploit the inherent data heterogeneity to inject falsified information while remaining undetected. For a detailed lower bound result, see~\cite{KHJ21,AFGGPS23}.


\subsubsection{Byzantine-robust learning under information compression} 
In \cite{bernstein2018signsgd}, majority voting was integrated with \algname{signSGD} (with and without Polyak Momentum) to achieve both Byzantine robustness and communication compression. However, it is known that \algname{signSGD} does not always converge \cite{karimireddy2019error}. Leveraging general unbiased compressors, \cite{zhu2023byzantine_TSP} proposed a communication-efficient and Byzantine-robust algorithm based on the variance-reduced method \algname{SAGA} and the gradient quantization framework \algname{DIANA} \cite{horvath2023stochastic}. This approach was further developed in \cite{gorbunov2022variance} with another variance-reduced method \algname{VR-MARINA}, establishing improved convergence guarantees under more relaxed assumptions. More practical contractive compressors, such as $\text{Top}_k$ quantization, which are typically biased and used in combination with the advanced technique of error feedback \cite{seide20141,richtarik2021ef21}, have also been explored recently to address this challenge. However, all existing works in this area have considered the limited case of full gradient \cite{ghosh2021communication,rammal2023communication}. 

Another closely related work is \cite{acharya2022robust}, where $\text{Top}_k$ compression was applied at the server, rather than at the workers, to reduce the computational load of the geometric median operation. Additionally, a memory augmentation mechanism similar to error feedback was incorporated to enhance performance.

\subsection{Main Contributions}
We summarize our main contributions below.
\subsubsection{The first Byzantine-robust stochastic distributed learning method with error feedback} We propose the first Byzantine-robust distributed stochastic learning method utilizing practical biased compressors. Similar to \cite{ghosh2021communication,rammal2023communication}, we employ biased compression in conjunction with the error feedback strategy \algname{EF21} from \cite{richtarik2021ef21}. However, unlike \cite{ghosh2021communication,rammal2023communication}, which rely on full gradient information, our new algorithm is batch-free and leverages Polyak Momentum to mitigate the noise from both stochastic gradients and biased compression, ultimately enhancing the defense against Byzantine workers.

\subsubsection{New complexity results} We establish the complexity bounds of our new algorithm under standard assumptions. These complexity results demonstrate that our algorithm outperforms the state-of-the-art in the specific case of full gradients \cite{rammal2023communication}. Indeed, our complexity bounds are tight, as they match the lower bound results in both the stochastic and full gradient scenarios when the problem is Byzantine-free.

\subsubsection{Smaller size of the neighborhood} Under the standard $G^2$-heterogeneity assumption, our new algorithm converges to a smaller neighborhood around the optimal solution than all existing competitors, aligning with the lower bound \cite{KHJ21,AFGGPS23}. For detailed comparisons, see Table \ref{table1}.

The remainder of the paper is organized as follows: In Section 2, we formulate the problem and introduce key technical preliminaries. In Section 3, we present our new algorithm along with its convergence rate guarantees. In Section 4, we provide empirical results, followed by the conclusion in Section 5.
\begin{table*}[t]
	\centering
	\caption{Summary of related works on Byzantine-robust and communication-efficient distributed methods. ``Assumption": additional assumptions beyond the smoothness of $\mathcal{L}_i, i \in \mathcal{H}$ and bounded heterogeneity among honest workers. ``Complexity": the total number of communication rounds required for each worker to find $x$ such that $\mathbb{E}\left[\lVert \nabla\mathcal{L}_{\mathcal{H}} ({x}) \rVert \right] \leq \varepsilon$. 
		S.C. stands for $\mu$-strong convexity.
		$\sigma^2$ denotes the variance of local stochastic gradients, $\kappa$ represents the parameter of robust aggregators, $\alpha \in (0,1]$ and $\omega \geq 0$ stand for the parameter for biased contractive and unbiased compressors, respectively, $G^2$ is the bound of heterogeneity among honest workers. $p\in(0,1]$ is the sample probability used in \algname{Byz-VR-MARINA}. $m$ is the size of local dataset on workers for \algname{Byz-VR-MARINA} and \algname{BROADCAST}. 
	}
	\label{table1}
	\begin{threeparttable}
		\centering
		\begin{tabular}{cccccc}
			\hline
			Method &
			Assumption & Compressor &
			Batch-free? &
			Complexity & Accuracy
			\\
			\hline
			\begin{tabular}{c}
				\algname{BROADCAST} \tnote{\color{blue}(1)}\\
				\cite{zhu2023byzantine_TSP}   
			\end{tabular} &
			\begin{tabular}{c}
				{ $\mathcal{L}_i$ is finite-} \\
				{ sum and S.C.}
			\end{tabular}
			& unbiased &
			\xmark& $\frac{m^2{(1+\omega)^{\nicefrac{3}{2}}}(n-f)}{\mu^2(n-2f)}$
			& { ${\kappa (1+\omega) G^2}$} \\
			\hline
			\begin{tabular}{c}
				\algname{Byz-VR-MARINA} \tnote{\color{blue}(2)}\\
				\cite{gorbunov2022variance} 
			\end{tabular} &
			\begin{tabular}{c}
				$\mathcal{L}_i$ is \\
				finite-sum 
			\end{tabular}
			& unbiased &
			\xmark& 
			{ $\frac{\left( 1+ \sqrt{\max\{ \omega^2, {m\omega } \}} \left( \sqrt{\frac{1}{n-f}}+\sqrt{\kappa \max\{ \omega, {m} \} }  \right)\right)}{\varepsilon^2}$} & $\frac{\kappa G^2}{p}$ \\
			\hline
			\begin{tabular}{c}
				\algname{Byz-EF21} \tnote{\color{blue}(2)}\\
				\cite{rammal2023communication} 
			\end{tabular}
			&
			\begin{tabular}{c}
				full\\
				gradient
			\end{tabular} & \begin{tabular}{c}
				biased \\
				contractive
			\end{tabular} &
			\xmark &
			$\frac{1+\sqrt{\kappa}}{\alpha \varepsilon^2}$ & {$(\kappa+\sqrt{\kappa})G^2$} \\
			\hline
			\begin{tabular}{c}
				\algname{Byz-EF21-SGDM}\\
				(This work)
			\end{tabular} & 
			\begin{tabular}{c}
				bounded \\
				variance
			\end{tabular}
			& \begin{tabular}{c}
				biased \\
				contractive
			\end{tabular}
			&
			\cmark &
			\begin{tabular}{c}
				$\frac{ \sigma^2 }{(n-f)\varepsilon^4}+ \frac{\kappa \sigma^2 }{\varepsilon^4} $ \\
				{$\frac{\sqrt{\kappa+1}}{\alpha \varepsilon^2} $} (full gradient)
			\end{tabular} & $\kappa G^2$
			\\
			\hline
		\end{tabular}
		\begin{tablenotes}
			\item[{\color{blue} (1)}] \algname{BROADCAST} relies on a specific aggregation method, namely the geometric median. In this context, we derive the lower bound for the achievable accuracy by leveraging the property that geometric median is $(f,\kappa)$-robust with $\kappa = (1+\frac{f}{n-2f})^2$.
			\item[{\color{blue} (2)}] For comparison, the complexity results of \algname{Byz-VR-MARINA} and \algname{Byz-EF21} are derived by exploring the relationship between $(\delta,c)$-agnostic robust aggregator and $(f,\kappa)$-robust aggregator. 
			See Remark \ref{remark:full_gradient} for details.
		\end{tablenotes}
\end{threeparttable}
\end{table*}

\section{Problem Statement and Preliminaries}


Consider a server-worker distributed learning system with one central server and $n$ workers. Each worker $i\in [n]$ possesses a local dataset $\mathcal{D}_i$. The distributions $\mathcal{D}_1, \mathcal{D}_2, \dots, \mathcal{D}_n$ may differ arbitrarily.
These workers collaborate with the central server to train a model, parameterized by $x \in\mathbb{R}^d$, by solving
\begin{equation}\label{eq:original_P}
\begin{split}
	\min_{x\in\mathbb{R}^d} \, \left\{ \mathcal{L}(x)= \frac{1}{n} \sum_{i=1}^n  \mathcal{L}_i(x) \right\}
\end{split}
\end{equation}
where 
\begin{equation*}
\mathcal{L}_i(x) = \mathbb{E}_{\xi_{i}\sim \mathcal{D}_i}  \ell(x, \xi_{i})
\end{equation*}
and $\ell$ represents the loss over a single data point.
We consider an adversarial setting where the server is honest, but  $f$  out of  $n$  workers exhibit Byzantine behavior, with their identities being unknown \cite{lamport2019byzantine}. Throughout this work, we denote by $\mathcal{H}$ the set of indices of honest workers.

Due to the presence of Byzantine workers, solving \eqref{eq:original_P} is neither reasonable nor generally possible. Instead, a more reasonable goal is to approximate a stationary point of the following cost function:
\begin{equation}
\label{eq:shifted_problem}
\mathcal{L}_{\mathcal{H}}(x)=\frac{1}{n-f} \sum_{i\in\mathcal{H}}\mathcal{L}_i(x).
\end{equation}
We define an algorithm as Byzantine-robust if it can find an $\varepsilon$-approximate stationary point for $\mathcal{L}_{\mathcal{H}}$ despite the presence of $f$ Byzantine workers. In particular, we introduce the concept of Byzantine robustness as follows.

\begin{definition}[\bf{$(f,\varepsilon)$-Byzantine robustness}]
\label{def:Byzantine_robustness}
A learning algorithm is said $(f,\varepsilon)$-Byzantine robust if, even in the presence of $f$ Byzantine workers, it outputs $\hat{x}$ satisfying
\begin{equation*}
	\mathbb{E}\left[\lVert \nabla\mathcal{L}_{\mathcal{H}} (\hat{x}) \rVert^2 \right] \leq \varepsilon
\end{equation*}
where $\mathbb{E}[\cdot]$ is defined over the randomness of the algorithm.
\end{definition}

Note that $(f,\varepsilon)$-Byzantine robustness for any $\varepsilon$ is generally not possible when $f\geq n/2$ \cite{liu2021approximate}. Therefore, we assume an upper bound for the number of Byzantine workers $f<n/2$ in this work. 

In this work, we aim to design a Byzantine-robust distributed stochastic learning method for solving \(\eqref{eq:shifted_problem}\), with compressed communication between the server and workers.

\paragraph{Standard Byzantine-robust methods}
Byzantine-robust distributed learning algorithms comprise two primary components: robust aggregation and base optimization method. 

For robust aggregation, several effective methods include  \algname{CWTM} \cite{yin2018byzantine} and centered clipping \cite{KHJ21}. 
To quantify aggregation robustness, we introduce the concept of $(f,\kappa)$-robustness \cite{AFGGPS23}, which refers to the property that, for any subset of inputs of size $n-   f$, the output of the aggregation rule is in close proximity to the average of these inputs. This notion acts as a metric for evaluating the robustness of various aggregation rules; see \cite{AFGGPS23} for a detailed quantification of common aggregators.


\begin{definition}[\bf{$(f,\kappa)$-robustness}]
Given an integer $f< n/2$ and a real number $\kappa \geq 0$, an aggregation rule $F$ is $(f,\kappa)$-robust if for any set of $n$ vectors $\{g_1,g_2,\dots, g_n\}$, 
and any subset $S\subseteq [n]$ with $\sz{S} = n-f$,
\begin{align}
	\norm{F(g_1,g_2,\dots, g_n) - \overline{g}_S}^2 \leq \frac{\kappa}{\sz{S}}\sum_{i\in S}\norm{g_i - \overline{g}_S}^2\,
\end{align}
where $\overline{g}_S := {(n-f)^{-1}}\sum_{i\in S}g_i$.
\end{definition}
We note that pre-aggregation techniques such as bucketing \cite{KHJ22} and \algname{NNM} \cite{AFGGPS23} have proven effective in improving Byzantine robustness both theoretically and practically.

It has also been revealed that reducing the variance of honest gradients is beneficial to defend against Byzantine agents \cite{wu2020federated}. Such idea has been explored by designing Byzantine-robust methods based on \algname{SAGA} \cite{wu2020federated}, Polyak Momentum \cite{el2021distributed,FGGPS22}, and \algname{VR-MARINA} \cite{gorbunov2022variance}.


\paragraph{Communication compression} Communication compression techniques, such as quantization \cite{alistarh2018convergence,stich2018sparsified}, are highly effective in reducing the communication overhead of distributed learning methods. Among these techniques, (biased) contractive compressors stand out as the most versatile and practically useful class of compression mappings.
\begin{definition}[Contractive compressors]\label{def:contractive}
A (possibly randomized) mapping $\mathcal{C}:\mathbb{R}^d\rightarrow \mathbb{R}^d$ is called a contractive compression operator if there exists a constant $\alpha \in (0,1]$ such that 
\begin{equation*}
	\mathbb{E} [ \lVert \mathcal{C}(z)-z \rVert^2] \leq (1-\alpha) \lVert z \rVert^2, \,\, \forall z\in\mathbb{R}^d.
\end{equation*}
\end{definition}

The contractive condition in Definition \ref{def:contractive} is naturally satisfied by various compressors. Notable examples include: \textit{i)} the $\text{Top}_k$ sparsifier \cite{stich2018sparsified}, which retains the $k$ largest components of $z$ in magnitude and sets the remaining entries to zero, and \textit{ii)} the $\text{Rand}_k$ sparsifier \cite{beznosikov2023biased}, which preserves a randomly chosen subset of $k$ entries of $z$ and sets the remaining coordinates to zero, and then scales the sparisified vector by $d/k$. 
For an overview of both biased and unbiased compressors, refer to the summary provided in \cite{beznosikov2023biased}.

\paragraph{Brittleness of existing communication-efficient and Byzantine-robust solutions}
Addressing the critical needs of both Byzantine robustness and communication compression in heterogeneous setups is challenging, as communication compression introduces noise that further complicates defense against Byzantine workers. 
Indeed, existing approaches either focused on unbiased compressors \cite{gorbunov2022variance,zhu2023byzantine_TSP} and/or assumed full gradient information \cite{ghosh2021communication,rammal2023communication}. Consequently, none of them are applicable to the practically useful setting with batch-free stochastic gradients and biased contractive compression, which is widely employed in modern machine learning with large datasets.

Moreover, the state-of-the-art methods suffer from relatively large optimization errors and do not align with the lower bound provided in \cite{KHJ21,AFGGPS23} for Byzantine-robust distributed learning with heterogeneous datasets. It remains unclear whether this lower bound is still achievable under information compression.





\section{Communication-Efficient and Byzantine-Robust Distributed Learning}\label{sec:main}

In this section, we introduce our new algorithm and provide its rate of convergence.

\subsection{Algorithm description}

Communication compression introduces an additional layer for Byzantine workers to exploit, beyond the inherent heterogeneity in the system. Recent communication-efficient and Byzantine-robust methods \cite{zhu2023byzantine_TSP,gorbunov2022variance} addressed this challenge by developing algorithms based on the variance-reduced methods \algname{SAGA} and \algname{VR-MARINA}. However, these approaches still require full gradients at some iterations and only accommodate unbiased compressors due to the sensitivity of their base methods to biased gradient estimates. To overcome this limitation, we explore the use of Polyak Momentum, which imposes no requirements on the batch size and also exhibits a variance reduction effect \cite{el2021distributed,FGGPS22}.

We summarize our new method \algname{Byz-EF21-SGDM} in Algorithm \ref{alg:Byz-EF21-SGDM}. This algorithm builds on the recently proposed Error Feedback Enhanced with Polyak Momentum (\algname{EF21-SGDM}) \cite{fatkhullin2024momentum}. At each iteration $t$, the server applies an $(f,\kappa)$-robust aggregation over $g_i^{(t)}$ to compute  $g^{(t)}$, and updates the model parameter as  $x^{(t+1)} = x^{(t)} - \gamma g^{(t)}$, where $\gamma$ is the step-size. Then, the server broadcasts the updated model to all workers. Upon receiving  $x^{(t+1)}$, honest workers estimate their local momentums as  $v_i^{(t+1)} = (1 - \eta) v_i^{(t)} + \eta \nabla_x \ell_i(x^{(t+1)}, \xi_i^{(t+1)})$ (line 5), compress the changes in momentum  $c_i^{(t+1)} = \mathcal{C}(v_i^{(t+1)} - g_i^{(t)})$, and send these compressed vectors to the server (line 6). Concurrently, honest workers update their local state  $g_i^{(t+1)} = g_i^{(t)} + c_i^{(t+1)}$ based on the compressed change (line 7). The server also maintains a copy of $g_i^{(t)}$ for each worker and updates it upon receiving the compressed vectors (line 9).

Three crucial aspects of the proposed algorithm are introduced as follows. \textit{First}, our new algorithm utilizes stochastic gradients without imposing any requirements on batch size. This marks an improvement over existing competitors, which either require full gradients at every iteration \cite{ghosh2021communication,rammal2023communication} or at some iterations with a certain probability \cite{zhu2023byzantine_TSP,gorbunov2022variance}.
\textit{Second}, honest workers transmit only the compressed changes in their local true and estimated momentum variables to the server, specifically $\mathcal{C}(v_i^{(t+1)}-g_i^{(t)})$. This approach reduces communication overhead and helps exclude Byzantine workers who deviate from the algorithm by sending dense vectors. \textit{Third}, robust aggregation is performed on local momentum variables, leveraging the variance reduction effect to enhance the defense against Byzantine workers.

\begin{algorithm}[t]
\caption{\algname{Byz-EF21-SGDM}}
\label{alg:Byz-EF21-SGDM}
\textbf{Input}: initial model $x^{(0)}$, step-size $\gamma>0$, momentum coefficient $\eta\in(0,1]$, robust aggregation rule $F$, 
the number of rounds $T$\\
\textbf{Output}: $\hat{x}^{(T)}$ sampled uniformly from $x^{(0)},x^{(1)},\dots,x^{(T-1)}$\\
\textbf{Initialization}: 
for every honest worker $i\in \mathcal{H}$, 
$v_i^{(0)} = g_i^{(0)} = \nabla_x \ell_i(x^{(0)},\xi_{i}^{(0)})$, 
each worker $i\in[n]$ sends $g_i^{(0)}$ to the server

\begin{algorithmic}[1] 
	\FOR{$t= 0,1,\ldots, T-1$}
	\STATE Server computes
	$g^{(t)} = F(\{g_1^{(t)}, \ldots, g_n^{(t)}\})$ and $x^{(t+1)} = x^{(t)} - \gamma g^{(t)}$ 
	\STATE Server broadcasts $x^{(t+1)}$ to all workers
	\FOR{every honest worker $i\in \mathcal{H}$ in parallel}
	\STATE Estimate local momentum $    v_i^{(t+1)}=(1-\eta) v_i^{(t)} +  \eta \nabla_x \ell_i(x^{(t+1)},\xi_i^{(t+1)})$
	\STATE Compress $c_i^{(t+1)} = \mathcal{C}(v_i^{(t+1)}- g_i^{(t)})$ and send $c_i^{(t+1)}$ to the server
	\STATE Update local state 
	$  g_i^{(t+1)} = g_i^{(t)}  + c_i^{(t+1)} $
	\ENDFOR
	\STATE Server updates $g_i^{(t)},i\in[n]$ according to $g_i^{(t+1)} = g_i^{(t)}  + c_i^{(t+1)} $ 
	\ENDFOR
\end{algorithmic}
\end{algorithm}

\subsection{Rate of convergence}

To prove the convergence of \algname{Byz-EF21-SGDM}, we investigate the interaction between Polyak Momentum and robust aggregation under information compression. Specifically, we demonstrate that the error in stochastic gradients due to Byzantine workers depends on the compression error, the momentum deviation from the gradient, and the heterogeneity (see Lemma \ref{lem:robust_aggregation} in the Appendix). Furthermore, we observe that both the compression error and the momentum deviation can be bounded by $\lVert x^{(t+1)} - x^{(t)} \rVert^2$ multiplied by a constant, along with an additional term arising from heterogeneity (see Lemmas \ref{lem:compression_error} and \ref{lem:momentum} in the Appendix). These error terms can be managed using the well-known descent lemma \cite{li2021page}, which includes a term proportional to $-\lVert x^{(t+1)} - x^{(t)} \rVert^2$. As another contribution, we reveal that the lower bound for the best achievable accuracy, established for standard Byzantine-robust methods \cite{KHJ21,AFGGPS23}, is also attainable by our algorithm under information compression.

Before proceeding to the main result, we introduce three standard assumptions. We start from presenting the standard smoothness assumption.

\begin{assumption}[{Smoothness}]\label{assump:smoothness}
We assume $\mathcal{L}_{\mathcal{H}}$ is $L$-smooth, that is, $\lVert \nabla \mathcal{L}_{\mathcal{H}}(x)- \nabla \mathcal{L}_{\mathcal{H}}(y) \rVert  \leq  L \lVert x-y \rVert, \,\, \forall x,y \in \mathbb{R}^d$, and each $\mathcal{L}_i$ is $L_i$-smooth. We denote $\tilde{L}=(n-f)^{-1}\sum_{i\in\mathcal{H}}L_i^2$. In addition, we assume that $\mathcal{L}_{\mathcal{H}}$ is lower bounded, i.e., $\mathcal{L}_{\mathcal{H}}^* := \min_{x \in \mathbb{R}^d} \mathcal{L}_{\mathcal{H}}(x)>-\infty$.
\end{assumption}

To ensure provable Byzantine robustness, it is necessary to assume bounded heterogeneity among honest workers. Without this assumption, Byzantine workers could transmit arbitrary vectors while remaining undetected by pretending to have non-representative data.

\begin{assumption}[{Bounded heterogeneity}]\label{assump:bounded_hetero}
There exists a non-negative $G$ such that
\begin{equation*}
	\frac{1}{ n-f}\sum_{i\in\mathcal{H}} \lVert \nabla \mathcal{L}_i(x)- \nabla \mathcal{L}_{\mathcal{H}}(x) \rVert^2 \leq  G^2, \,\, \forall x\in\mathbb{R}^d.
\end{equation*}
	\end{assumption}

	\begin{assumption}[{Bounded variance}]\label{assump:bounded_var}
For each honest worker $i \in \mathcal{H}$, there exists $\sigma >0$ such that
\begin{equation*}
	\mathbb{E} \left[ \lVert \nabla_{x} \ell_i(x,\xi_i)-\nabla \mathcal{L}_i(x) \rVert^2  \right]\leq \sigma^2, \,\, \forall x \in\mathbb{R}^d
\end{equation*}
where $ \xi_i \sim \mathcal{D}_i$ are i.i.d. random samples.
\end{assumption}


Our convergence rate analysis is dependent on the following Lyapunuov function:
\begin{equation}\label{eq:lyapunuov}
\begin{split}
	\Gamma^{(t)} = &\delta^{(t)}+ 
	\frac{6\gamma(4\eta^2(1+\eta)(1+6\kappa)+3\kappa \alpha^2)}{\eta \alpha^2(n-f)}
	\sum_{i\in\mathcal{H}} \left\lVert M_i^{(t)} \right\rVert^2\\
	&+\frac{3\gamma}{\eta}\left\lVert \widetilde{M}^{(t)} \right\rVert^2 
\end{split}
\end{equation}
where $\delta_t =  \mathcal{L}_{\mathcal{H}}(x^{(t)})-\mathcal{L}_{\mathcal{H}}^* $, $M_i^{(t)}=  {v}_i^{(t)} - \nabla \mathcal{L}_i(x^{(t)})$, and $\widetilde{M}^{(t)} = (n-f)^{-1}\sum_{i\in\mathcal{H}} v_i^{(t)} - \nabla \mathcal{L}_{\mathcal{H}}(x^{(t)})$. The main convergence result for general non-convex functions is presented in Theorem \ref{thm:rate}, with its proof provided in the Appendix \ref{appdxSec:proofThm}.

\begin{theorem}\label{thm:rate}
Suppose Assumptions \ref{assump:smoothness}, \ref{assump:bounded_hetero}, and \ref{assump:bounded_var} hold.
For Algorithm \ref{alg:Byz-EF21-SGDM} applied to solve the distributed learning problem \eqref{eq:original_P} in the presence of $f<n/2$ Byzantine workers and communication compression with parameter $\alpha \in (0,1]$ defined in Definition \ref{def:contractive}, if $\eta\leq 1$ and
\begin{equation*}
	\gamma \leq \min \left\{ \frac{\alpha}{8\tilde{L}\sqrt{3(6\kappa+1)}},   \frac{\eta}{2\sqrt{3(6\kappa\tilde{L}^2+L^2)}}\right\},
\end{equation*}
then
\begin{equation}\label{eq:bound}
	\begin{split}
		& \mathbb{E} \left[ \lVert \nabla \mathcal{L}_{\mathcal{H}}(\hat{x} ^{(T)}) \rVert^2 \right]   \leq  \frac{\Gamma_0 }{\gamma T}  + \Delta \sigma^2  + 18\kappa G^2
	\end{split}
\end{equation}
where $\hat{x}^{(T)}$ is sampled uniformly at random from $x^{(0)},x^{(1)},\dots,x^{(T-1)}$, $\Gamma^{(0)}$ is defined in \eqref{eq:lyapunuov}, and
$
\Delta = {24\eta^3 (6\kappa +1) }/{\alpha^2}  + {6(6\kappa+1)\eta^2}/{\alpha}  + {3\eta}/{(n-f)}+ 18\kappa\eta.
$
By setting momentum 
$
\eta \leq \min \biggl\{ \left( \frac{L\delta_0\alpha^2}{24(1+6\kappa )\sigma^2 T} \right)^{\nicefrac{1}{4}}, \left( \frac{L\delta_0\alpha}{6(1+6\kappa)\sigma^2 T} \right)^{\nicefrac{1}{3}},  
$
$\left( \frac{L\delta_0(n-f)}{3\sigma^2 T} \right)^{\nicefrac{1}{2}} ,\left( \frac{L\delta_0}{18\kappa\sigma^2 T} \right)^{\nicefrac{1}{2}} 
\biggl\}
$,
we obtain
\begin{equation*}
	\begin{split}
		& \mathbb{E} \left[ \lVert \nabla \mathcal{L}_{\mathcal{H}}(\hat{x} ^{(T)}) \rVert^2 \right]  \\
		&\leq  \frac{\Gamma_0 }{\gamma T}  + \left( \frac{(24(1+6\kappa))^{\nicefrac{1}{3}}L\delta_0\sigma^{\nicefrac{2}{3}}}{\alpha^{\nicefrac{2}{3}} T} \right)^{\nicefrac{3}{4}} +  \left( \frac{18\kappa L\delta_0  \sigma^2}{T} \right)^{\nicefrac{1}{2}} \\
		& \quad +  \left( \frac{3L\delta_0 \sigma^2}{(n-f)T} \right)^{\nicefrac{1}{2}}  + \left( \frac{ \sqrt{6(1+6\kappa)} L\delta_0 \sigma}{\sqrt{\alpha}T} \right)^{\nicefrac{2}{3}}  + 18\kappa G^2.
	\end{split}
\end{equation*}
\end{theorem}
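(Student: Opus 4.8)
The plan is to assemble a single-step contraction for the Lyapunov function $\Gamma^{(t)}$ in \eqref{eq:lyapunuov} out of three ingredients: a descent inequality for $\mathcal{L}_{\mathcal{H}}$, the robust-aggregation bound (Lemma \ref{lem:robust_aggregation}), and the two drift recursions for the momentum deviation (Lemma \ref{lem:momentum}) and the compression state (Lemma \ref{lem:compression_error}). First I would apply the descent lemma of \cite{li2021page} to the $L$-smooth objective along the update $x^{(t+1)}=x^{(t)}-\gamma g^{(t)}$. Substituting $g^{(t)}=-\gamma^{-1}(x^{(t+1)}-x^{(t)})$ and expanding the cross term $\langle\nabla\mathcal{L}_{\mathcal{H}}(x^{(t)}),x^{(t+1)}-x^{(t)}\rangle$ yields
\[
\delta^{(t+1)}\leq\delta^{(t)}-\tfrac{\gamma}{2}\lVert\nabla\mathcal{L}_{\mathcal{H}}(x^{(t)})\rVert^2-\left(\tfrac{1}{2\gamma}-\tfrac{L}{2}\right)\lVert x^{(t+1)}-x^{(t)}\rVert^2+\tfrac{\gamma}{2}\lVert g^{(t)}-\nabla\mathcal{L}_{\mathcal{H}}(x^{(t)})\rVert^2,
\]
so that a negative multiple of $\lVert x^{(t+1)}-x^{(t)}\rVert^2$ is available as a budget against which all subsequent error terms are charged.

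Next I would control the aggregation error $\lVert g^{(t)}-\nabla\mathcal{L}_{\mathcal{H}}(x^{(t)})\rVert^2$ through Lemma \ref{lem:robust_aggregation}, expressing it via the per-worker compression errors, the momentum deviations $\lVert M_i^{(t)}\rVert^2$, and the heterogeneity; the irreducible floor $18\kappa G^2$ in \eqref{eq:bound} originates from the $\kappa G^2$ contribution here. I would then add into the descent step the recursion of Lemma \ref{lem:momentum}, which contracts $\mathbb{E}[\lVert M_i^{(t)}\rVert^2]$ by a factor $(1-\eta)$ while injecting a variance term $\propto\eta^2\sigma^2$ and a drift $\propto\lVert x^{(t+1)}-x^{(t)}\rVert^2$, together with the \algname{EF21} recursion of Lemma \ref{lem:compression_error}, which carries the analogous contraction and $\lVert x^{(t+1)}-x^{(t)}\rVert^2$ drift for the compression state. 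The precise coefficients weighting $\sum_{i\in\mathcal{H}}\lVert M_i^{(t)}\rVert^2$ and $\lVert\widetilde{M}^{(t)}\rVert^2$ in $\Gamma^{(t)}$ are reverse-engineered so that, after this summation, the momentum-deviation and compression terms telescope and only their variance injections and $\lVert x^{(t+1)}-x^{(t)}\rVert^2$ drifts survive.

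The main obstacle, and the technical core of the argument, is the coefficient bookkeeping in this combination: one must verify that the total positive coefficient of $\lVert x^{(t+1)}-x^{(t)}\rVert^2$ produced by the aggregation, momentum, and compression recursions is dominated by the descent budget $\tfrac{1}{2\gamma}-\tfrac{L}{2}$. The two arguments of the step-size minimum are exactly the constraints that make this hold: $\gamma\leq\alpha/(8\tilde{L}\sqrt{3(6\kappa+1)})$ tames the compression-induced drift, which scales like $\tilde{L}^2/\alpha^2$ and carries the robustness factor $\kappa$, whereas $\gamma\leq\eta/(2\sqrt{3(6\kappa\tilde{L}^2+L^2)})$ tames the momentum-induced drift governed by the smoothness constants. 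Once both are imposed, the drifts are absorbed and one is left with a clean recursion of the form
\[
\mathbb{E}[\Gamma^{(t+1)}]\leq\mathbb{E}[\Gamma^{(t)}]-\gamma\,\mathbb{E}[\lVert\nabla\mathcal{L}_{\mathcal{H}}(x^{(t)})\rVert^2]+\gamma\left(\Delta\sigma^2+18\kappa G^2\right),
\]
where $\Delta$ collects the four $\eta$-dependent variance coefficients.

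Finally I would telescope this inequality from $t=0$ to $T-1$, drop the nonnegative $\mathbb{E}[\Gamma^{(T)}]\geq 0$, and divide by $\gamma T$; since $\hat{x}^{(T)}$ is drawn uniformly from the iterates, the resulting time-average of squared gradient norms equals $\mathbb{E}[\lVert\nabla\mathcal{L}_{\mathcal{H}}(\hat{x}^{(T)})\rVert^2]$, giving \eqref{eq:bound}. For the second displayed estimate I would substitute the prescribed momentum: each branch of the minimum defining $\eta$ is the value that balances one term of $\Delta\sigma^2$ against the optimization error $\Gamma_0/(\gamma T)\sim L\delta_0/(\eta T)$ (using $\gamma\propto\eta$), and since every summand of $\Delta\sigma^2$ is increasing in $\eta$, selecting $\eta$ as the minimum bounds each summand by its value at the corresponding root. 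This produces exactly the four variance summands of the stated rate, while the $\Gamma_0/(\gamma T)$ and $18\kappa G^2$ terms persist unchanged.
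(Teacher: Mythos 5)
Your proposal is correct and follows essentially the same route as the paper: the descent lemma supplies the $-\lVert x^{(t+1)}-x^{(t)}\rVert^2$ budget, the three-way decomposition of $\lVert g^{(t)}-\nabla\mathcal{L}_{\mathcal{H}}(x^{(t)})\rVert^2$ is handled by Lemmas \ref{lem:robust_aggregation}, \ref{lem:compression_error}, and \ref{lem:momentum}, the two step-size constraints absorb the drift, and the momentum choice balances each summand of $\Delta\sigma^2$ against $L\delta_0/(\eta T)$. The only cosmetic difference is that you telescope a per-step Lyapunov recursion, whereas the paper sums the descent inequality first and then substitutes the accumulated (already-summed) forms of the lemmas; the bookkeeping is identical.
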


Theorem \ref{thm:rate} reveals the convergence of $\mathbb{E} \left[ \lVert \nabla \mathcal{L}_{\mathcal{H}}(\hat{x} ^{(T)}) \rVert^2 \right]$ by the proposed algorithm \algname{Byz-EF21-SGDM}, which aligns with the Byzantine robustness metric in Definition \ref{def:Byzantine_robustness}. We highlight several important properties of Theorem \ref{thm:rate}. This theorem provides the first theoretical result demonstrating the convergence of the error feedback method with stochastic gradients under Byzantine attacks. In the heterogeneous case (i.e.,  $G > 0$), the algorithm does not guarantee that $\mathbb{E} \left[ \lVert \nabla \mathcal{L}_{\mathcal{H}}(\hat{x} ^{(T)}) \rVert\right]$ can be made arbitrarily small. This limitation is inherent to all Byzantine-robust algorithms in heterogeneous settings. Specifically, with an order-optimal robustness coefficient \(\kappa = \mathcal{O}(\nicefrac{f}{n})\), such as with \algname{CWTM}, the result aligns with the lower bound \(\Omega(\nicefrac{f}{n}G^2)\) established by \cite{AFGGPS23}. The best achievable accuracy by \algname{Byz-EF21-SGDM} is tighter than those by \algname{Byz-VR-MARINA} and \algname{Byz-EF21} (see Table \ref{table1}).

As a consequence of Theorem \ref{thm:rate}, we provide complexity results for \algname{Byz-EF21-SGDM} in Corollary \ref{coro:stochastic_gradient}.

\begin{corollary}\label{coro:stochastic_gradient}
$\mathbb{E} \left[ \lVert \nabla \mathcal{L}_{\mathcal{H}}(\hat{x} ^{(T)}) \rVert \right]\leq \varepsilon$ after 
\begin{equation*}
	\begin{split}
		T= &\mathcal{O}\bigg(\frac{\tilde{L}\sqrt{\kappa+1}}{\alpha \varepsilon^2} + 
		\frac{(\kappa+1)^{\nicefrac{1}{3}}L\sigma^{\nicefrac{2}{3}}}{\alpha^{\nicefrac{2}{3}} \varepsilon^{\nicefrac{8}{3}}}   +    \frac{\sqrt{\kappa+1}L \sigma}{\alpha^{\nicefrac{1}{2}}\varepsilon^3}  \\
		& \quad \quad +   \frac{((n-f)\kappa+1) L  \sigma^2 }{(n-f)\varepsilon^4}
		\bigg)
	\end{split}
\end{equation*}
iterations.
\end{corollary}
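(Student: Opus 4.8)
The plan is to obtain Corollary \ref{coro:stochastic_gradient} as a mechanical inversion of the final convergence bound in Theorem \ref{thm:rate}. The first step is to reconcile the two norms: the definition of robustness and Theorem \ref{thm:rate} control $\mathbb{E}[\|\nabla\mathcal{L}_{\mathcal{H}}(\hat x^{(T)})\|^2]$, whereas the corollary is stated for $\mathbb{E}[\|\nabla\mathcal{L}_{\mathcal{H}}(\hat x^{(T)})\|]$. By Jensen's inequality $\mathbb{E}[\|\nabla\mathcal{L}_{\mathcal{H}}(\hat x^{(T)})\|] \le \big(\mathbb{E}[\|\nabla\mathcal{L}_{\mathcal{H}}(\hat x^{(T)})\|^2]\big)^{1/2}$, so it suffices to choose $T$ large enough that the right-hand side of the final bound in Theorem \ref{thm:rate} is at most $\varepsilon^2$. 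The additive term $18\kappa G^2$ is the irreducible heterogeneity floor and fixes the accuracy rather than the horizon; the iteration count is therefore governed by the five remaining decaying terms, and I would require each of them to be at most a constant fraction of $\varepsilon^2$, taking $T$ to be the sum (equivalently, up to constants, the maximum) of the resulting thresholds.

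Next I would invert the five terms individually. For the optimization term $\Gamma_0/(\gamma T)$ I substitute the largest admissible step-size. The compression-limited constraint $\gamma \le \alpha/(8\tilde L\sqrt{3(6\kappa+1)})$ gives $1/\gamma = \mathcal{O}(\tilde L\sqrt{\kappa+1}/\alpha)$, so $\Gamma_0/(\gamma T)\le\varepsilon^2$ (with $\Gamma_0$ treated as an $\varepsilon$-independent constant) yields the first complexity term $\mathcal{O}(\tilde L\sqrt{\kappa+1}/(\alpha\varepsilon^2))$. The four noise terms are then pure exponent bookkeeping: setting the $(\cdot)^{3/4}$ term at most $\varepsilon^2$ and raising to the power $4/3$ produces the $\varepsilon^{-8/3}$ term; the $(\cdot)^{2/3}$ term produces the $\varepsilon^{-3}$ term; and the two $(\cdot)^{1/2}$ terms each produce an $\varepsilon^{-4}$ contribution that merge, after pulling out the common factor $n-f$, into $\mathcal{O}(((n-f)\kappa+1)L\sigma^2/((n-f)\varepsilon^4))$. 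Summing the four contributions reproduces the stated bound.

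The only step that is not purely mechanical is handling the coupling between $\gamma$ and the decreasing momentum $\eta$. Since $\gamma$ is also bounded by $\eta/(2\sqrt{3(6\kappa\tilde L^2+L^2)})$ and $\eta$ is tuned to vanish with $T$, I must justify that in the $\Gamma_0/(\gamma T)$ term it is the compression-limited bound on $\gamma$ that is binding, so that no spurious $\eta^{-1}$ factor appears; the momentum-limited portion of the step-size constraint has already been absorbed into the four $\sigma$-dependent terms when $\eta$ was optimized in the proof of Theorem \ref{thm:rate}. I would also check that the prescribed $\eta=\min\{\cdots\}$ satisfies the hypothesis $\eta\le1$ once $T$ reaches the stated order, so that Theorem \ref{thm:rate} applies. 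With these consistency checks in place, the remaining computations are routine and deliver the four terms of the complexity bound exactly.
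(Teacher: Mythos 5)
Your proposal is correct and follows essentially the same route the paper intends: the corollary is a direct inversion of the final bound in Theorem \ref{thm:rate}, obtained by applying Jensen's inequality to pass from $\mathbb{E}[\lVert\nabla\mathcal{L}_{\mathcal{H}}(\hat x^{(T)})\rVert]$ to the squared-norm bound, treating $18\kappa G^2$ as the irreducible accuracy floor, and requiring each decaying term to be at most a constant fraction of $\varepsilon^2$. Your added consistency check on the $\eta$-dependent branch of the step-size constraint (verifying it only contributes terms of the same order as the $\sigma$-dependent ones, since $\Gamma_0/\gamma$ involves $1/\eta$) is a point the paper leaves implicit, but it resolves the way you describe.
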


Corollary \ref{coro:stochastic_gradient} provides the first sample complexity result for a Byzantine-robust stochastic method with error feedback. In the absence of Byzantine faults (i.e., $\kappa=0$), this corollary yields an asymptotic sample complexity of $\mathcal{O}(\nicefrac{L\sigma^2}{(n-f)\varepsilon^4})$ in the regime $\varepsilon \rightarrow 0$, which is optimal \cite{huang2022lower,fatkhullin2024momentum}.

Next, we consider a special case where local full gradients are available to workers, i.e., $\sigma=0$.

\begin{corollary}\label{coro:full_gradient}
If $\sigma=0$, then $\mathbb{E} \left[ \lVert \nabla \mathcal{L}_{\mathcal{H}}(\hat{x} ^{(T)}) \rVert \right]\leq \varepsilon$ after $T=\mathcal{O}(\nicefrac{\tilde{L}\sqrt{\kappa+1}}{\alpha \varepsilon^2})$ iterations.
\end{corollary}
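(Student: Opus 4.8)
The plan is to specialize Theorem~\ref{thm:rate} to $\sigma=0$ and then read off the iteration count, so the whole argument is a bookkeeping exercise on the bound \eqref{eq:bound} together with the step-size rule. Setting $\sigma=0$ immediately annihilates the term $\Delta\sigma^2$ (independently of the explicit form of $\Delta$), so \eqref{eq:bound} collapses to
\begin{equation*}
	\mathbb{E}\left[\lVert\nabla\mathcal{L}_{\mathcal{H}}(\hat{x}^{(T)})\rVert^2\right]\leq \frac{\Gamma_0}{\gamma T}+18\kappa G^2 .
\end{equation*}
The floor $18\kappa G^2$ is the irreducible heterogeneity bias reported as the accuracy in Table~\ref{table1}, and it does not enter the complexity; the complexity question therefore reduces to how many iterations are needed to drive the reducible term $\Gamma_0/(\gamma T)$ below $\varepsilon^2$, after which Jensen's inequality gives $\mathbb{E}[\lVert\nabla\mathcal{L}_{\mathcal{H}}(\hat{x}^{(T)})\rVert]\leq\varepsilon$.

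Next I would evaluate $\Gamma_0$ in this regime. Because the local stochastic gradients have zero variance, the initialization $v_i^{(0)}=g_i^{(0)}=\nabla_x\ell_i(x^{(0)},\xi_i^{(0)})$ equals the exact gradient $\nabla\mathcal{L}_i(x^{(0)})$ almost surely, so the momentum deviations satisfy $M_i^{(0)}=0$ and $\widetilde{M}^{(0)}=0$. The two momentum terms in the Lyapunov function \eqref{eq:lyapunuov} thus vanish at $t=0$, leaving $\Gamma_0=\delta_0=\mathcal{L}_{\mathcal{H}}(x^{(0)})-\mathcal{L}_{\mathcal{H}}^*$, a constant independent of the tuning parameters.

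It then remains to take the step-size as large as the constraint allows. When $\sigma=0$ the momentum-tuning upper bounds in the second part of Theorem~\ref{thm:rate}, each of which scales like an inverse power of $\sigma$, become vacuous, so $\eta$ is free subject only to $\eta\leq1$; I would take $\eta=1$, which makes the second branch of the step-size rule largest. It then suffices to compare the two branches $\alpha/(8\tilde{L}\sqrt{3(6\kappa+1)})$ and $1/(2\sqrt{3(6\kappa\tilde{L}^2+L^2)})$. Using $\sqrt{6\kappa\tilde{L}^2+L^2}=\Theta(\tilde{L}\sqrt{\kappa+1})$, the second branch is of order $(\tilde{L}\sqrt{\kappa+1})^{-1}$ while the first is of order $\alpha(\tilde{L}\sqrt{\kappa+1})^{-1}$; since $\alpha\in(0,1]$ the first branch is the binding one and $\gamma=\Theta(\alpha/(\tilde{L}\sqrt{\kappa+1}))$. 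Imposing $\Gamma_0/(\gamma T)\leq\varepsilon^2$ and substituting $\Gamma_0=\delta_0$ yields $T\geq\delta_0/(\gamma\varepsilon^2)=\mathcal{O}(\tilde{L}\sqrt{\kappa+1}/(\alpha\varepsilon^2))$, which is exactly the claimed rate.

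The computation is routine; the two points that require care are (i) confirming that the momentum constraints from Theorem~\ref{thm:rate} genuinely disappear as $\sigma\to0$ rather than forcing $\eta\to0$, which holds because those constraints are upper bounds that blow up as $\sigma\to0$, and (ii) verifying that the minimum defining $\gamma$ is attained at the first branch. The latter is the main place a spurious factor could enter: one must pin down the relationship between $L$ and $\tilde{L}$ so that $\sqrt{6\kappa\tilde{L}^2+L^2}=\Theta(\tilde{L}\sqrt{\kappa+1})$ and the comparison between the two branches reduces cleanly to the factor $\alpha\leq1$.
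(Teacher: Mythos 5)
Your derivation is correct and follows exactly the route the paper intends: the corollary is stated without a separate proof as a direct specialization of Theorem~\ref{thm:rate}, and you correctly observe that $\sigma=0$ kills the $\Delta\sigma^2$ term, forces $M_i^{(0)}=\widetilde{M}^{(0)}=0$ so that $\Gamma_0=\delta_0$, frees $\eta$ up to $\eta=1$, and leaves the first step-size branch binding (using $L\leq\tilde{L}$), giving $\gamma=\Theta(\alpha/(\tilde{L}\sqrt{\kappa+1}))$ and hence the stated $T$. Your handling of the irreducible $18\kappa G^2$ floor also matches the paper's convention of separating complexity from achievable accuracy.
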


\begin{remark}\label{remark:full_gradient}
In the special case of full gradients, \cite{rammal2023communication} proved a complexity of $\mathcal{O}(\nicefrac{1+\sqrt{c\delta}}{\alpha \varepsilon^2})$, where $\delta=\nicefrac{f}{n}$ and $c$ are two parameters characterizing the agnostic robust aggregator (ARAgg) \cite{KHJ21}.
Note that an $(f,\kappa)$-robust aggregation rule is also a $(\delta,c)$-ARAgg with $c=\nicefrac{\kappa n}{2f}$ \cite{AFGGPS23}.
Therefore, Corollary \ref{coro:full_gradient} slightly improves the complexity result $\mathcal{O}(\nicefrac{1+\sqrt{\kappa}}{\alpha \varepsilon^2})$ from \cite{rammal2023communication}. Furthermore, in the absence of Byzantine faults (i.e., when $\kappa=0$), the iteration complexity simplifies to $T=\mathcal{O}(\nicefrac{\tilde{L}}{\alpha \varepsilon^2})$, which is optimal as shown by \cite{huang2022lower}.
\end{remark}





\section{Experimental Evaluation}


In this section, we evaluate the practical significance of \algname{Byz-EF21-SGDM} through a comprehensive series of comparisons against the state-of-the-art, benchmarking its performance on both binary classification and image classification tasks under four distinct Byzantine threats.

\subsection{Experiment setup} Our algorithm is compared with \algname{SGD} with unbiased compression and robust aggregation (denoted as \algname{BR-CSGD}), \algname{BR-DIANA} \cite{zhu2021broadcast}\footnote{\algname{BR-DIANA} is a simplified version of \algname{BROADCAST} without variance reduction. We do not compare with \algname{BROADCAST} because it consumes a large amount of memory that scales linearly with the number of data points.}, and \algname{Byz-VR-MARINA} \cite{gorbunov2022variance}. For the unbiased contractive compressor in our algorithm, we use $\text{Top}_k$. For all the other algorithms, we use $\text{Rand}_k$, which is ensured to be unbiased and aligns with their theoretical results.

For all methods, the step-size remains fixed throughout the training.  We do not employ learning rate warm-up or decay. Each experiment is run with three different random seeds, and we report the averages of performance across these runs with one standard error.



\subsubsection{Distributed system and datasets}
A distributed system of $n = 20$ workers is considered. Two benchmark datasets are used in the experiments. For both cases, the dataset is divided into 20 equal parts, with each part assigned to one of the 20 clients.

\begin{itemize}
    \item  \textit{Binary classification}: We employ the a9a dataset from LIBSVM \cite{Chang2011libsvm}, a widely recognized benchmark dataset in the field. This dataset comprises 32,561 training instances and 16,281 testing instances. On a9a, we train an $l_2$-regularized logistic regression model. 

    \item \textit{Image classification}: We sample 5\% of the images from the original FEMNIST dataset \cite{CDWLJ+18}, which includes 62 classes of handwritten characters, containing 805,263 training samples and 77,483 testing samples. 
    On FEMNIST, we train a convolutional neural network (CNN)\cite{krizhevsky2012imagenet} with two convolutional layers.

 \end{itemize}


\begin{figure*}[t]

\centering
\includegraphics[width=.8\textwidth]{}
\caption{The training loss of 3 aggregation rules (\algname{RFA}, \algname{CWMed}, \algname{CWTM}) under 4 attacks (SF, IPM, LF, ALIE) on the a9a dataset. The dataset is uniformly split among 20 workers, including 9 Byzantine workers. \algname{BR-CSGD}, \algname{BR-DIANA}, and \algname{Byz-VR-MARINA} use the $\text{Rand}_1$ compressor. Our algorithm (\algname{Byz-EF21-SGDM}) uses the $\text{Top}_1$ compressor.}
\label{image1}
\end{figure*}

\subsubsection{Robust aggregation rules} 
For our algorithms and the compared algorithms, we use standard averaging (\algname{AVG}) and the following three robust aggregation rules.
\begin{itemize}
	\item \textit{Robust federated averaging (\algname{RFA})} \cite{PKH22}: \algname{RFA} is also known as the geometric median. It is the ``center point” of a given set of vectors, characterized by the smallest sum of distances to all points in the set. In the experiments, we use the smoothed Weiszfeld algorithm with 8 iterations to approximate this point \cite{PKH22}.
    \item \textit{Coordinate-wise median (\algname{CWMed})} \cite{yin2018byzantine}: \algname{CWMed} is obtained by calculating the median for each coordinate axis separately.
    \item  \textit{Coordinate-wise trimmed mean (\algname{CWTM})} \cite{yin2018byzantine}: \algname{CWTM} reduces the impact of noise and outliers by pruning extreme values in each coordinate direction and then calculating the mean of the remaining data.
\end{itemize}

To enhance robustness, they are used in conjunction with the pre-aggregation strategy \algname{NNM} \cite{AFGGPS23}. \algname{NNM} calculates pairwise distances between worker updates, identifies the nearest neighbors for each worker, and computes mixed updates as weighted averages of these neighbors.

\subsubsection{Byzantine attacks}
For both $l_2$-regularized logistic regression and CNN, we set $f = 9$ for both models. The attackers follow 
four powerful strategies: 
\begin{itemize}
    \item \textit{Sign flipping (SF)} \cite{LXCGL19}: Each Byzantine worker $i$ sends $-c_i^{(t+1)}$ instead of $c_i^{(t+1)}$ to the server. 
    \item \textit{Label flipping (LF)} \cite{tolpegin2020data}: The label for each data point within Byzantine workers is flipped. Each Byzantine worker $i$ sends $c_i^{(t+1)}$ to the server, where all the updates are calculated using flipped labels.
    \item \textit{Inner product manipulation (IPM)} \cite{XKG20}: Each attacker $i$ calculates $-{\epsilon}
\lvert \mathcal{H}\rvert^{-1} (\sum_{i\in\mathcal{H}}c_i^{(t+1)})$, {performs a $\text{Top}_k$ operation}, and sends the result to the server. We use $\epsilon = 0.1$ in all experiments. 
    \item \textit{A little is enough (ALIE)} \cite{NEURIPS2019_ec1c5914}: Each attacker $i$ estimates the mean $\mu_\mathcal{H}$ and the coordinate-wise standard deviation $\sigma_\mathcal{H}$ of $\{c_i^{(t+1)}\}_{i\in \mathcal{H}}$. It then computes $\mu_\mathcal{H} - z\sigma_\mathcal{H}$, {performs a $\text{Top}_k$ operation}, and sends the result to the server. Here, $z$ is a small constant that makes the attacks effective but closer to the mean than a significant fraction of honest updates in each coordinate.

\end{itemize}
Upon receiving updates from Byzantine and honest workers, the server adds $g_i^{(t)}$ to the received updates to attain $g_i^{(t+1)}$, for all $i\in[n]$.

\subsection{Empirical results on logistic regression}
For this task, we consider solving a logistic regression problem with an $l_2$-regularization:
\begin{equation*}
    \ell(x, \xi_i) = \log\left( 1 + \exp\left( - b_i a_i x \right) \right) + \lambda \lVert  x\rVert^2
\end{equation*}
where $\xi_i = (a_i, b_i) \in \mathbb{R}^{1 \times d} \times \{ -1, 1 \}$ denotes each data point and $\lambda>0$ denotes the regularization parameter.
We use $\lambda= 1/m$ in this experiments, where $m$ is the number of samples in the local datasets. 
For all methods, we use a batch size of $1$, and select the step-size from the following candidates: $\gamma \in \{0.1, 0.01, 0.001\}$. We use $k=1$ for both $\text{Top}_k$ and $\text{Rand}_k$ compressors. 
Specific parameters for the different algorithms are reported as follows. 
For our algorithm \algname{Byz-EF21-SGDM}, we set the momentum parameter $\eta = 0.01$. For \algname{Byz-VR-MARINA}, we set the probability to compute full gradient as 1 over the number of batches, as suggested in \cite{gorbunov2022variance}. For \algname{BROADCAST}, we set the compressed difference parameter $\beta = 0.01$, which is a typical choice for the \algname{DIANA} framework \cite{horvath2023stochastic}. Finally, the number of epochs is set to 40.

We present the training loss of \algname{BR-CSGD}, \algname{BR-DIANA}, \algname{Byz-VR-MARINA}, and \algname{Byz-EF21-SGDM} on the a9a dataset in Figure \ref{image1}. The findings indicate that our algorithm, \algname{Byz-EF21-SGDM}, and \algname{Byz-VR-MARINA} demonstrate greater robustness against adversarial attacks compared to \algname{BR-CSGD} and \algname{BR-DIANA}. Furthermore, \algname{Byz-EF21-SGDM} achieves faster convergence than \algname{Byz-VR-MARINA} under all considered scenarios.

\subsection{Empirical results on CNN}
We also compare the performance of \algname{BR-CSGD}, \algname{BR-DIANA}, \algname{Byz-VR-MARINA} and \algname{Byz-EF21-SGDM} on training a CNN with two convolutional layers using the FEMNIST dataset. For all methods, we use a batch size of $32$ and select the step-size from the following candidates: $\gamma \in \{0.1, 0.01, 0.001\}$. We use $k=0.1d$ for both $\text{Top}_k$ and $\text{Rand}_k$ compressors. 
For our algorithm \algname{Byz-EF21-SGDM}, the momentum parameter is set as $\eta = 0.1$. For \algname{Byz-VR-MARINA}, we set the probability to compute full gradient as 1 over the number of batches. For \algname{BROADCAST}, we set the compressed difference parameter $\beta = 0.01$. The training process is carried out over 100 epochs.

\begin{figure}[t]

\centering
\includegraphics[width=.45\textwidth]{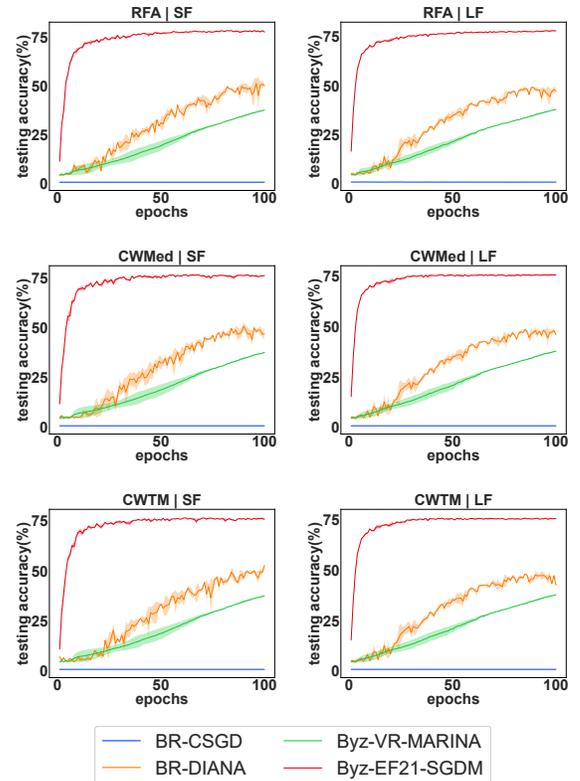}
\caption{The testing accuracy of 3 aggregation rules (\algname{RFA}, \algname{CWMed}, \algname{CWTM}) under 2 attacks (SF, LF) on the FEMNIST dataset. The dataset is uniformly split among 20 workers, including 9 Byzantine workers. \algname{BR-CSGD}, \algname{BR-DIANA}, and \algname{Byz-VR-MARINA} use the $\text{Rand}_{k}$ compressor, and our algorithm (\algname{Byz-EF21-SGDM}) uses the $\text{Top}_{k}$ compressor, where $k=0.1d$.}
\label{image2}
\end{figure}

Figure \ref{image2} presents the testing accuracy of the four methods. The results highlight that under both SF and LF attacks, \algname{Byz-EF21-SGDM} with every choice of robust aggregation rules exhibits significant advantages over all other Byzantine robustness algorithms. 

To compare the performance of \algname{Byz-EF21-SGDM} with different robust aggregation rules, we present Table \ref{table2}. The results suggest that all three robust aggregation rules are effective. Moreover, as additional blocks, they do not compromise performance in the absence of attacks. Finally, \algname{RFA} slightly outperforms the other two methods against all four attacks.

\begin{table}[ht]
    \centering
    \caption{Performance of testing accuracy for our algorithm \algname{Byz-EF21-SGDM} on the FEMNIST dataset  at an adversarial rate of 0.45. N.A. denotes the case with no Byzantine attackers.}
    \begin{tabular}{c|cccccc}
         Aggregation & SF & IPM & LF & ALIE & N.A.  \\
         \hline
         \algname{RFA} +  \algname{NNM} & $\textbf{77.53}$ & $\textbf{77.80}$ & $\textbf{78.00}$ & $\textbf{70.84}$ & $\textbf{80.41}$   \\
         \algname{CWMed} + \algname{NNM} & $76.09$ & $74.87$ & $75.62$ & $70.29$ & $80.13$  \\
         \algname{CWTM} + \algname{NNM} & $75.81$ & $74.65$ & $75.52$ & $70.33$ & $80.30$  \\
         \hline
    \end{tabular}
    \label{table2}
\end{table}



\section{Conclusion}
We have proposed a Byzantine-robust stochastic distributed learning method under communication compression, enhanced with error feedback. The new algorithm is batch-free and guaranteed to converge to a smaller neighborhood around the optimal solution than existing competitors.
We have proven the convergence rate and robustness guarantees for the proposed algorithm. Additionally, we have demonstrated the advantages of the method through experimental studies on both binary classification with convex logistic loss and image classification with non-convex loss in a heterogeneous setting.

This work establishes a foundation for numerous future research endeavors, particularly in the realm of decentralized problem setups.


\appendices

\section{Proof of Theorem \ref{thm:rate}}\label{appdxSec:proofThm}


We begin by presenting four technical lemmas that establish the foundation for proving Theorem \ref{thm:rate}. Following this, we proceed with the proof of Theorem \ref{thm:rate}.

\subsection{Key lemmas}

We now state the following lemma, which is instrumental in the analysis of non-convex optimization methods \cite{li2021page}.

\begin{lemma}[Descent lemma]\label{lem:descent}
	Given an $L$-smooth function $\mathcal{L}$. For the update $x^{(t+1)}= x^{(t)}-\gamma g^{(t)}$, there holds
	\begin{equation*}
		\begin{split}
		\mathcal{L}(x^{(t+1)}) \leq & \mathcal{L}(x^{(t)}) - \frac{\gamma}{2} \lVert  \nabla \mathcal{L}(x^{(t)}) \rVert^2+ \frac{\gamma}{2}\lVert g^{(t)} - \nabla \mathcal{L}(x^{(t)}) \rVert^2 \\
		&- (\frac{1}{2\gamma}- \frac{L}{2}) \lVert 
		x^{(t+1)} - x^{(t)}\rVert^2. 
		\end{split}
	\end{equation*}
\end{lemma}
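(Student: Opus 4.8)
The plan is to derive the claim directly from the quadratic upper bound implied by $L$-smoothness and then convert the resulting inner-product term into the three squared-norm quantities on the right-hand side via a single application of the polarization identity. First I would invoke $L$-smoothness of $\mathcal{L}$ in its standard descent form,
\begin{equation*}
\mathcal{L}(x^{(t+1)}) \leq \mathcal{L}(x^{(t)}) + \langle \nabla \mathcal{L}(x^{(t)}), x^{(t+1)} - x^{(t)} \rangle + \frac{L}{2}\lVert x^{(t+1)} - x^{(t)} \rVert^2,
\end{equation*}
which holds for any $L$-smooth function and any two points and follows by integrating the gradient along the segment.

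Next I would eliminate the cross term using the update rule. Substituting $x^{(t+1)} - x^{(t)} = -\gamma g^{(t)}$ gives $\langle \nabla \mathcal{L}(x^{(t)}), x^{(t+1)} - x^{(t)} \rangle = -\gamma \langle \nabla \mathcal{L}(x^{(t)}), g^{(t)} \rangle$, and then applying the polarization identity $\langle a, b \rangle = \tfrac{1}{2}(\lVert a \rVert^2 + \lVert b \rVert^2 - \lVert a - b \rVert^2)$ with $a = \nabla \mathcal{L}(x^{(t)})$ and $b = g^{(t)}$ rewrites this cross term as
\begin{equation*}
-\frac{\gamma}{2}\lVert \nabla \mathcal{L}(x^{(t)}) \rVert^2 - \frac{\gamma}{2}\lVert g^{(t)} \rVert^2 + \frac{\gamma}{2}\lVert g^{(t)} - \nabla \mathcal{L}(x^{(t)}) \rVert^2.
\end{equation*}
Here the first summand already matches the target term $-\frac{\gamma}{2}\lVert \nabla \mathcal{L}(x^{(t)}) \rVert^2$ and the third matches the noise term $+\frac{\gamma}{2}\lVert g^{(t)} - \nabla \mathcal{L}(x^{(t)}) \rVert^2$.

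Finally I would handle the remaining $-\frac{\gamma}{2}\lVert g^{(t)} \rVert^2$ by again invoking the update rule, which yields $\lVert g^{(t)} \rVert^2 = \gamma^{-2}\lVert x^{(t+1)} - x^{(t)} \rVert^2$, so this summand becomes $-\frac{1}{2\gamma}\lVert x^{(t+1)} - x^{(t)} \rVert^2$. Combining it with the $+\frac{L}{2}\lVert x^{(t+1)} - x^{(t)} \rVert^2$ contributed by the smoothness bound produces exactly the coefficient $-\left(\frac{1}{2\gamma} - \frac{L}{2}\right)$ on $\lVert x^{(t+1)} - x^{(t)} \rVert^2$, which closes the proof. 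There is no real obstacle here: the argument is a purely algebraic rearrangement, and the only point requiring care is the bookkeeping of the factors of $\gamma$ when translating between $g^{(t)}$ and the displacement $x^{(t+1)} - x^{(t)}$.
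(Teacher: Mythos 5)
Your proof is correct and is exactly the standard argument: the paper itself does not spell out a proof of this lemma but defers to the cited reference, where the same chain — $L$-smoothness quadratic upper bound, substitution of $x^{(t+1)}-x^{(t)}=-\gamma g^{(t)}$, polarization of the cross term, and rewriting $-\frac{\gamma}{2}\lVert g^{(t)}\rVert^2$ as $-\frac{1}{2\gamma}\lVert x^{(t+1)}-x^{(t)}\rVert^2$ — is used. Nothing is missing; the bookkeeping of the $\gamma$ factors is handled correctly.
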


\begin{lemma}[Robust aggregation error]\label{lem:robust_aggregation}
	Suppose Assumption \ref{assump:bounded_hetero} holds.
	Then for all $t\geq 0$ the iterates produced by \algname{Byz-EF21-SGDM} in Algorithm \ref{alg:Byz-EF21-SGDM} satisfy
	\begin{equation*}
		\begin{split}
			\lVert g^{(t)} - \overline{g}^{(t)} \rVert^2 
			\leq & \frac{6\kappa(n-f-1)}{(n-f)^2} \sum_{i\in \mathcal{H}} \left( \lVert C_i^{(t)} \rVert^2 + \lVert M_i^{(t)} \rVert^2   \right) \\
			& + \frac{6\kappa(n-f-1)}{n-f} G^2
		\end{split}
	\end{equation*}
	where $\overline{g}^{(t)} := {(n-f)^{-1}}\sum_{i\in \mathcal{H}}g_i$, $C_i^{(t)}= {g}_i^{(t)} -  {v}_i^{(t)} $ and $M_i^{(t)}=  {v}_i^{(t)} - \nabla \mathcal{L}_i(x^{(t)})$.
\end{lemma}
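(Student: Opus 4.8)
The plan is to reduce everything to the $(f,\kappa)$-robustness guarantee applied with the honest index set $S=\mathcal{H}$, which has size exactly $n-f$. Since $(f,\kappa)$-robustness holds for \emph{any} collection of $n$ input vectors and \emph{any} honest subset, it applies verbatim to the server's aggregate $g^{(t)}=F(\{g_1^{(t)},\dots,g_n^{(t)}\})$ regardless of what the Byzantine workers inject. This immediately yields
$$\lVert g^{(t)}-\overline{g}^{(t)}\rVert^2 \leq \frac{\kappa}{n-f}\sum_{i\in\mathcal{H}}\lVert g_i^{(t)}-\overline{g}^{(t)}\rVert^2,$$
so the task reduces to bounding the dispersion of the honest states around their own mean. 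The key structural observation is that the definitions $C_i^{(t)}=g_i^{(t)}-v_i^{(t)}$ and $M_i^{(t)}=v_i^{(t)}-\nabla\mathcal{L}_i(x^{(t)})$ telescope to give the exact decomposition $g_i^{(t)}=\nabla\mathcal{L}_i(x^{(t)})+M_i^{(t)}+C_i^{(t)}$, separating the honest state into a heterogeneity part, a momentum-deviation part, and a compression part.

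Next I would rewrite the honest dispersion through the pairwise-variance identity $\sum_{i\in\mathcal{H}}\lVert g_i^{(t)}-\overline{g}^{(t)}\rVert^2=\frac{1}{2(n-f)}\sum_{i,j\in\mathcal{H}}\lVert g_i^{(t)}-g_j^{(t)}\rVert^2$. Working with differences $g_i^{(t)}-g_j^{(t)}$ is convenient because the mean cancels, and the decomposition above gives $g_i^{(t)}-g_j^{(t)}=(\nabla\mathcal{L}_i-\nabla\mathcal{L}_j)+(M_i^{(t)}-M_j^{(t)})+(C_i^{(t)}-C_j^{(t)})$. I would then apply $\lVert a+b+c\rVert^2\leq 3(\lVert a\rVert^2+\lVert b\rVert^2+\lVert c\rVert^2)$ to split the pairwise sum into three blocks, and bound each block uniformly via $\lVert u-w\rVert^2\leq 2\lVert u\rVert^2+2\lVert w\rVert^2$. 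Since each index appears in $n-f-1$ off-diagonal pairs (the diagonal terms vanish), every pairwise sum collapses to $4(n-f-1)$ times a sum of squared norms: namely $4(n-f-1)\sum_{i\in\mathcal{H}}\lVert M_i^{(t)}\rVert^2$ for the momentum block, the analogous expression for the compression block, and $4(n-f-1)\sum_{i\in\mathcal{H}}\lVert\nabla\mathcal{L}_i(x^{(t)})-\nabla\mathcal{L}_{\mathcal{H}}(x^{(t)})\rVert^2$ for the gradient block after centering each $\nabla\mathcal{L}_i(x^{(t)})$ at $\nabla\mathcal{L}_{\mathcal{H}}(x^{(t)})$.

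The gradient block is then controlled by Assumption~\ref{assump:bounded_hetero}, which gives $\sum_{i\in\mathcal{H}}\lVert\nabla\mathcal{L}_i(x^{(t)})-\nabla\mathcal{L}_{\mathcal{H}}(x^{(t)})\rVert^2\leq(n-f)G^2$. Substituting the three bounds back, dividing by $2(n-f)$ from the pairwise identity, and finally multiplying by $\kappa/(n-f)$ from the robustness inequality reproduces the claimed coefficients $6\kappa(n-f-1)/(n-f)^2$ on the $\sum_{i\in\mathcal{H}}(\lVert C_i^{(t)}\rVert^2+\lVert M_i^{(t)}\rVert^2)$ term and $6\kappa(n-f-1)/(n-f)$ on $G^2$. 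I do not expect a genuine obstacle here; the only delicate point is bookkeeping, specifically choosing the pairwise route (rather than centering each block individually, which would yield a strictly tighter but differently shaped bound) so that all three blocks contribute the common factor $4(n-f-1)$ and the stated constants emerge cleanly.
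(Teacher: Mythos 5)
Your proposal is correct and follows essentially the same route as the paper's proof: apply $(f,\kappa)$-robustness with $S=\mathcal{H}$, pass to the pairwise-variance identity, decompose $g_i^{(t)}-g_j^{(t)}$ into compression, momentum, and heterogeneity differences, and bound the last block by Assumption~\ref{assump:bounded_hetero}. The only cosmetic difference is that you obtain the factor $6$ as $3\times 2$ (three-term split followed by a two-term split) while the paper applies a single six-term inequality; the constants come out identically.
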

\begin{proof}
	Define $H_i^{(t)}=  \nabla \mathcal{L}_i(x^{(t)})-\nabla \mathcal{L}(x^{(t)})$. We consider
	\begin{equation*}
		\begin{split}
			&\sum_{i,j\in \mathcal{H}} \lVert g_i^{(t)}- g_j^{(t)} \rVert^2  \\
			& = \sum_{i,j\in \mathcal{H}, i\neq j} \lVert C_i^{(t)}+ M_i^{(t)} + H_i^{(t)}- C_j^{(t)}- M_j^{(t)} - H_j^{(t)} \rVert^2 \\
			& \leq 6  \sum_{i,j\in \mathcal{H}, i\neq j} 
			\Big( \lVert C_i^{(t)} \rVert^2 + \lVert M_i^{(t)} \rVert^2 + \lVert H_i^{(t)} \rVert^2 + \lVert C_j^{(t)} \rVert^2  \\
			& \quad \quad\quad\quad\quad\quad + \lVert M_j^{(t)} \rVert^2 + \lVert H_j^{(t)} \rVert^2 \Big) \\
			& = 12(n-f-1)  \sum_{i\in \mathcal{H}} \left( \lVert C_i^{(t)} \rVert^2 + \lVert M_i^{(t)} \rVert^2 + \lVert H_i^{(t)} \rVert^2  \right) \\
			& \leq 12(n-f-1)  \sum_{i\in \mathcal{H}} \left( \lVert C_i^{(t)} \rVert^2 + \lVert M_i^{(t)} \rVert^2   \right)  \\
			 & \quad +  12(n-f-1)(n-f) G^2.
		\end{split}
	\end{equation*}
	Because 
	\begin{equation*}
		\sum_{i\in \mathcal{H}} \lVert g_i^{(t)}- \overline{g}^{(t)} \rVert^2 = \frac{1}{2(n-f)} \sum_{i,j\in \mathcal{H}} \lVert g_i^{(t)}- g_j^{(t)} \rVert^2
	\end{equation*}
	and
	the aggregation rule is $(f,\kappa)$-robust, we have
	\begin{equation*}
		\begin{split}
			\lVert g^{(t)} - \overline{g}^{(t)} \rVert^2  
			&\leq \frac{\kappa}{n-f}\sum_{i\in \mathcal{H}} \lVert g_i^{(t)}- \overline{g}^{(t)} \rVert^2 \\
			& \leq  \frac{\kappa}{2(n-f)^2}\sum_{i,j\in \mathcal{H}} \lVert g_i^{(t)}- g_j^{(t)} \rVert^2 \\
			& \leq \frac{6\kappa(n-f-1)}{(n-f)^2} \sum_{i\in \mathcal{H}} \left( \lVert C_i^{(t)} \rVert^2 + \lVert M_i^{(t)} \rVert^2   \right) \\
			& \quad + \frac{6\kappa(n-f-1)}{n-f} G^2.
		\end{split}
	\end{equation*}
\end{proof}

\begin{lemma}[Accumulated compression error]\label{lem:compression_error}
	Suppose Assumptions \ref{assump:smoothness} and \ref{assump:bounded_var} hold. Then for all $t\geq 0$ the iterates produced by \algname{Byz-EF21-SGDM} in Algorithm \ref{alg:Byz-EF21-SGDM} satisfy                       
	\begin{equation*}
		\begin{split}
			\sum_{t=0}^{T-1}\mathbb{E}\left[ \lVert C_i^{(t)} \rVert^2 \right]  \leq &  \frac{8\eta^2}{\alpha^2} \sum_{t=0}^{T-1} \mathbb{E}\left[\lVert   M_i^{(t)}\rVert^2 \right]    + \frac{2 (1-\alpha)  \eta^2 T \sigma^2}{\alpha}     \\
			&+ \frac{8\eta^2L_i^2}{\alpha^2} \sum_{t=0}^{T-1} \mathbb{E}\left[\lVert x^{(t+1)}-x^{(t)}\rVert^2 \right], \forall i \in\mathcal{H}
		\end{split}
	\end{equation*}
	where $C_i^{(t)}= {g}_i^{(t)} -  {v}_i^{(t)} $ and $M_i^{(t)}=  {v}_i^{(t)} - \nabla \mathcal{L}_i(x^{(t)})$.
\end{lemma}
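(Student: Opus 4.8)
The plan is to derive a one-step contraction inequality for $\mathbb{E}[\lVert C_i^{(t+1)}\rVert^2]$ and then telescope it. First I would read off from lines 5--7 of Algorithm \ref{alg:Byz-EF21-SGDM} that $g_i^{(t+1)} = g_i^{(t)} + \mathcal{C}(v_i^{(t+1)} - g_i^{(t)})$, so that $C_i^{(t+1)} = g_i^{(t+1)} - v_i^{(t+1)} = \mathcal{C}(v_i^{(t+1)} - g_i^{(t)}) - (v_i^{(t+1)} - g_i^{(t)})$. Applying the contractive property (Definition \ref{def:contractive}) to the argument $z = v_i^{(t+1)} - g_i^{(t)}$, taking expectation over the compressor randomness first and then over the history, yields $\mathbb{E}[\lVert C_i^{(t+1)}\rVert^2] \leq (1-\alpha)\,\mathbb{E}[\lVert v_i^{(t+1)} - g_i^{(t)}\rVert^2]$.

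The crucial step is to peel off the stochastic noise \emph{before} invoking Young's inequality, which is what keeps the $\sigma^2$ contribution small. I would write $v_i^{(t+1)} = \bar{v}_i^{(t+1)} + \eta s_i^{(t+1)}$, where $\bar{v}_i^{(t+1)} := (1-\eta)v_i^{(t)} + \eta\nabla\mathcal{L}_i(x^{(t+1)})$ is the conditional mean and $s_i^{(t+1)} := \nabla_x\ell_i(x^{(t+1)},\xi_i^{(t+1)}) - \nabla\mathcal{L}_i(x^{(t+1)})$ is zero-mean given the past. Since $\bar{v}_i^{(t+1)} - g_i^{(t)}$ is measurable with respect to the history and $s_i^{(t+1)}$ is conditionally zero-mean with variance at most $\sigma^2$ (Assumption \ref{assump:bounded_var}), the cross term vanishes in expectation and $\mathbb{E}[\lVert v_i^{(t+1)} - g_i^{(t)}\rVert^2] = \mathbb{E}[\lVert \bar{v}_i^{(t+1)} - g_i^{(t)}\rVert^2] + \eta^2\mathbb{E}[\lVert s_i^{(t+1)}\rVert^2] \le \mathbb{E}[\lVert \bar{v}_i^{(t+1)} - g_i^{(t)}\rVert^2] + \eta^2\sigma^2$. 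This is exactly why the final $\sigma^2$ term carries only the factor $(1-\alpha)$ rather than the $\alpha^{-2}$ amplification that a naive application of Young's inequality to the full residual would produce.

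Next I would rewrite the deterministic residual as $\bar{v}_i^{(t+1)} - g_i^{(t)} = -C_i^{(t)} + \eta\,(d_i^{(t)} - M_i^{(t)})$, where $d_i^{(t)} := \nabla\mathcal{L}_i(x^{(t+1)}) - \nabla\mathcal{L}_i(x^{(t)})$, and apply Young's inequality with parameter $\beta = \alpha/(2(1-\alpha))$, chosen so that $(1-\alpha)(1+\beta) = 1 - \alpha/2$ and $(1-\alpha)(1+\beta^{-1}) \leq 2/\alpha$. Combining this with $\lVert d_i^{(t)} - M_i^{(t)}\rVert^2 \le 2L_i^2\lVert x^{(t+1)} - x^{(t)}\rVert^2 + 2\lVert M_i^{(t)}\rVert^2$, where the first term uses $L_i$-smoothness (Assumption \ref{assump:smoothness}), gives the one-step recursion $\mathbb{E}[\lVert C_i^{(t+1)}\rVert^2] \leq (1-\tfrac{\alpha}{2})\mathbb{E}[\lVert C_i^{(t)}\rVert^2] + \tfrac{4\eta^2 L_i^2}{\alpha}\mathbb{E}[\lVert x^{(t+1)} - x^{(t)}\rVert^2] + \tfrac{4\eta^2}{\alpha}\mathbb{E}[\lVert M_i^{(t)}\rVert^2] + (1-\alpha)\eta^2\sigma^2$.

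Finally I would sum over $t = 0,\dots,T-1$, using the initialization $C_i^{(0)} = g_i^{(0)} - v_i^{(0)} = 0$ together with the elementary fact that a recursion $a_{t+1} \le (1-\rho)a_t + b_t$ implies $\rho\sum_{t=0}^{T-1} a_t \le a_0 + \sum_{t=0}^{T-1} b_t$; with $\rho = \alpha/2$ this multiplies each per-step coefficient by $2/\alpha$ and reproduces the three claimed terms ($\tfrac{8\eta^2}{\alpha^2}$ on the $M_i^{(t)}$ sum, $\tfrac{8\eta^2 L_i^2}{\alpha^2}$ on the displacement sum, and $\tfrac{2(1-\alpha)\eta^2 T\sigma^2}{\alpha}$). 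The main obstacle is precisely the bookkeeping in the second paragraph: isolating the martingale-difference noise term before Young's inequality so that $\sigma^2$ escapes the $\alpha^{-2}$ blow-up, and pinning down $\beta$ so the geometric factor is cleanly $1-\alpha/2$; the remaining smoothness bounds and the summation are routine.
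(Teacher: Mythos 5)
Your proof is correct and follows essentially the same route as the paper's: apply the contractive property to $v_i^{(t+1)}-g_i^{(t)}$, peel off the conditionally zero-mean stochastic-gradient noise before Young's inequality, decompose the remaining residual into $-C_i^{(t)}$ plus $\eta$ times a gradient-difference and momentum-deviation term, and telescope the resulting $(1-\alpha/2)$-contraction. The only cosmetic differences are your choice $\beta=\alpha/(2(1-\alpha))$ versus the paper's $\rho=\alpha/2$ (both yield the same constants) and your explicit use of the initialization $C_i^{(0)}=0$, which the paper leaves implicit.
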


\begin{proof}
	Recall the update
	\begin{equation*}
		\begin{split}
			g_i^{(t)}  = g_i^{(t-1)} + \mathcal{C}(v_i^{t}-g_i^{(t-1)}).
		\end{split}
	\end{equation*}
	For $i\in\mathcal{H}$, there holds
	\begin{equation*}
		\begin{split}
			&\mathbb{E}\left[ \lVert C_i^{(t)}\rVert^2 \right] \\
			&=    \mathbb{E}\left[ \lVert g_i^{(t-1)} - v_i^{(t)} + \mathcal{C}(v_i^{t}-g_i^{(t-1)}) \rVert^2 \right]  \\
			& =  \mathbb{E} \left[ \mathbb{E}_{\mathcal{C}} \left[ \lVert v_i^{(t)}- g_i^{(t-1)}  - \mathcal{C}(v_i^{t}-g_i^{(t-1)}) \rVert^2 \right] \right]       \\
			& \overset{(i)}{\leq} (1-\alpha)  \mathbb{E}\left[ \lVert v_i^{(t) }-g_i^{(t-1)} \rVert^2 \right] \\
			&  \overset{(ii)}{=} (1-\alpha)  \mathbb{E}\Big[ \mathbb{E}_{\xi_i^{(t)}}\Big[\lVert  v_i^{(t-1)} -g_i^{(t-1)} \\
			& \quad \quad \quad \quad \quad \quad \quad \quad \,\, + \eta (\nabla_x \ell_i(x^{(t)},\xi_i^{(t)})-v_i^{(t-1)}) \rVert^2 \Big] \Big] \\
			& = (1-\alpha)  \mathbb{E}\left[\lVert  v_i^{(t-1)} -g_i^{(t-1)} + \eta (\nabla \mathcal{L}_i(x^{(t)})-v_i^{(t-1)}) \rVert^2 \right]  \\
			& \quad  +(1-\alpha)  \eta^2 \mathbb{E}\left[\lVert  \nabla_x \ell_i(x^{(t)},\xi_i^{(t)})- \nabla \mathcal{L}_i(x^{(t)})\rVert^2 \right] \\
			& \overset{(iii)}{=} (1-\alpha)(1+\rho)  \mathbb{E}\left[\lVert  C_i^{(t-1)} \rVert^2 \right]  +(1-\alpha)  \eta^2 \sigma^2\\
			& \quad+ (1-\alpha)(1+\rho^{-1}) \eta^2\mathbb{E}\left[\lVert   \nabla \mathcal{L}_i(x^{(t)})-v_i^{(t-1)}\rVert^2 \right]     \\
			& \leq (1-\alpha)(1+\rho)  \mathbb{E}\left[\lVert  C_i^{(t-1)} \rVert^2 \right] +(1-\alpha)  \eta^2 \sigma^2 \\
			& \quad+ 2(1-\alpha)(1+\rho^{-1}) \eta^2\mathbb{E}\left[\lVert   M_i^{(t-1)}\rVert^2 \right]   \\
			&  \quad+ 2(1-\alpha)(1+\rho^{-1}) \eta^2 L_i^2 \mathbb{E}\left[\lVert x^{(t)}-x^{(t-1)}\rVert^2 \right]   . 
		\end{split}
	\end{equation*}
	where $(i)$ uses the contractive property as defined in Definition \ref{def:contractive}, $(ii)$ is due to the update
	\begin{equation*}
		v_i^{(t)}  = v_i^{(t-1)} + \eta (\nabla_x \ell_i(x^{(t)},\xi_i^{(t)})-v_i^{(t-1)}),
	\end{equation*}
	and $(iii)$ holds by Assumption \ref{assump:bounded_var} and Young's inequality for any $\rho>0$.
	Setting $\rho=\alpha/2$, we obtain
	\begin{equation*}
		(1-\alpha)(1+\rho) = 1-\frac{\alpha}{2}- \frac{\alpha^2}{2}\leq 1-\frac{\alpha}{2}
	\end{equation*}
	and 
	\begin{equation*}
		(1-\alpha)(1+\rho^{-1}) = \frac{2}{\alpha}-\alpha -1 \leq \frac{2}{\alpha}.
	\end{equation*}
	Thus, there holds
	\begin{equation*}
		\begin{split}
			\mathbb{E}\left[ \lVert C_i^{(t)}\rVert^2 \right] 
			 \leq & \left(1-\frac{\alpha}{2}\right)  \mathbb{E}\left[\lVert  C_i^{(t-1)} \rVert^2 \right] +  \frac{4\eta^2}{\alpha}\mathbb{E}\left[\lVert   M_i^{(t-1)}\rVert^2 \right]  \\
			&  + \frac{4\eta^2L_i^2}{\alpha} \mathbb{E}\left[\lVert x^{(t)}-x^{(t-1)}\rVert^2 \right]   +(1-\alpha)  \eta^2 \sigma^2. 
		\end{split}
	\end{equation*}
	Summing up the above inequality from $t=0$ to $t=T-1$ leads to
	\begin{equation*}
		\begin{split}
			\sum_{t=0}^{T-1}\mathbb{E}\left[ \lVert C_i^{(t)}\rVert^2 \right] 
			\leq & \frac{8\eta^2}{\alpha^2} \sum_{t=0}^{T-1} \mathbb{E}\left[\lVert   M_i^{(t)}\rVert^2 \right]    + \frac{2 (1-\alpha)  \eta^2 T \sigma^2}{\alpha}  \\
			&    + \frac{8\eta^2L_i^2}{\alpha^2} \sum_{t=0}^{T-1} \mathbb{E}\left[\lVert x^{(t+1)}-x^{(t)}\rVert^2 \right].
		\end{split}
	\end{equation*}

\end{proof}


\begin{lemma}[Accumulated momentum deviation]\label{lem:momentum}
	Suppose Assumptions \ref{assump:smoothness} and \ref{assump:bounded_var} hold. Then for all $t\geq 0$ the iterates produced by \algname{Byz-EF21-SGDM} in Algorithm \ref{alg:Byz-EF21-SGDM} satisfy     
	\begin{equation*}
		\begin{split}
		&\frac{1}{n-f} \sum_{t=0}^{T-1} \sum_{i\in\mathcal{H}} \mathbb{E}[\lVert M_i^{(t)}  \rVert^2] \\
			& \leq \frac{\tilde{L}^2}{\eta^2}\sum_{t=0}^{T-1}\mathbb{E} \left[  \lVert x^{(t+1)}-x^{(t)}  \rVert^2 \right] + \eta T \sigma^2 \\
			& \quad + \frac{1}{\eta (n-f)} \sum_{i\in\mathcal{H}} \mathbb{E} \left[  \lVert v_i^{(0)}-\nabla \mathcal{L}_i(x^{(0)})  \rVert^2 \right]
		\end{split}
	\end{equation*}
	and
	\begin{equation*}
		\begin{split}
		&	\sum_{t=0}^{T-1}  \mathbb{E}[\lVert \overline{v}^{(t)} - \nabla \mathcal{L}_{\mathcal{H}}(x^{(t)}) \rVert^2] \\
			& \leq  \frac{L^2}{\eta^2}\sum_{t=0}^{T-1}\mathbb{E} \left[  \lVert x^{(t+1)}-x^{(t)}  \rVert^2 \right] + \frac{\eta T \sigma^2}{n-f} \\
			& \quad + \frac{1}{\eta } \mathbb{E} \left[  \lVert \overline{v}^{(0)}-\nabla \mathcal{L}_{\mathcal{H}}(x^{(0)})  \rVert^2 \right]    
		\end{split}
	\end{equation*} 
	where $\overline{v}^{(t)}= (n-f)^{-1} \sum_{i\in\mathcal{H}}v_i^{(t)}$.
\end{lemma}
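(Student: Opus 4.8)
The plan is to derive a one-step contraction inequality for $M_i^{(t)} = v_i^{(t)} - \nabla \mathcal{L}_i(x^{(t)})$ and then sum it over $t$, exactly mirroring the telescoping argument already used for the compression error in Lemma \ref{lem:compression_error}. First I would substitute the momentum update $v_i^{(t)} = (1-\eta)v_i^{(t-1)} + \eta \nabla_x \ell_i(x^{(t)},\xi_i^{(t)})$ into the definition of $M_i^{(t)}$ and regroup so as to separate the part that is measurable before $\xi_i^{(t)}$ is drawn from the zero-mean stochastic part:
\begin{equation*}
M_i^{(t)} = (1-\eta)\bigl(M_i^{(t-1)} + \nabla \mathcal{L}_i(x^{(t-1)}) - \nabla \mathcal{L}_i(x^{(t)})\bigr) + \eta\bigl(\nabla_x \ell_i(x^{(t)},\xi_i^{(t)}) - \nabla \mathcal{L}_i(x^{(t)})\bigr).
\end{equation*}
The first group involves only the previous iterates; the last group has zero conditional mean given $x^{(t)}$, and its squared norm is bounded by $\sigma^2$ through Assumption \ref{assump:bounded_var}.

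Next I would take the conditional expectation over $\xi_i^{(t)}$, which kills the cross term and leaves $(1-\eta)^2$ times the squared norm of the measurable group plus $\eta^2\sigma^2$. Applying Young's inequality with parameter $\rho = \eta/(1-\eta)$, chosen precisely so that $(1-\eta)^2(1+\rho) = 1-\eta$, and invoking the $L_i$-smoothness bound $\lVert \nabla \mathcal{L}_i(x^{(t-1)}) - \nabla \mathcal{L}_i(x^{(t)})\rVert^2 \le L_i^2 \lVert x^{(t)}-x^{(t-1)}\rVert^2$ together with $(1-\eta)^2 \le 1$, this collapses to the clean recursion
\begin{equation*}
\mathbb{E}\bigl[\lVert M_i^{(t)}\rVert^2\bigr] \leq (1-\eta)\,\mathbb{E}\bigl[\lVert M_i^{(t-1)}\rVert^2\bigr] + \frac{L_i^2}{\eta}\,\mathbb{E}\bigl[\lVert x^{(t)}-x^{(t-1)}\rVert^2\bigr] + \eta^2\sigma^2.
\end{equation*}
Summing the geometric recursion over $t$ (dividing by the contraction factor $\eta$), averaging over $i \in \mathcal{H}$, and invoking the definition of $\tilde{L}$ in Assumption \ref{assump:smoothness} then produces the first claim, with the initialization term arising from $M_i^{(0)} = v_i^{(0)} - \nabla \mathcal{L}_i(x^{(0)})$.

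For the second inequality I would run the identical argument on the averaged momentum $\overline{v}^{(t)}$, whose update is the coordinate average of the per-worker updates, so that $\overline{v}^{(t)} - \nabla \mathcal{L}_{\mathcal{H}}(x^{(t)})$ obeys the same recursion with the $L$-smoothness of $\mathcal{L}_{\mathcal{H}}$ replacing the per-worker constants. The only genuinely different step is bounding the averaged noise: because the samples $\xi_i^{(t)}$ are drawn independently across honest workers, the cross terms cancel in expectation and the variance of $(n-f)^{-1}\sum_{i\in\mathcal{H}}(\nabla_x \ell_i(x^{(t)},\xi_i^{(t)}) - \nabla \mathcal{L}_i(x^{(t)}))$ is at most $\sigma^2/(n-f)$, which is exactly what yields the reduced noise floor $\eta T \sigma^2/(n-f)$.

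The main obstacle is the measurability bookkeeping: I must be careful to condition on the $\sigma$-field generated before $\xi_i^{(t)}$ is revealed so that $M_i^{(t-1)}$ and the smoothness displacement are both deterministic at that stage, guaranteeing the martingale cross term vanishes, and to verify that the independence across workers is genuinely available for the variance-reduction step. A secondary subtlety is that the admissible Young parameter $\rho = \eta/(1-\eta)$ degenerates at the boundary $\eta = 1$; this case must be handled separately, where $v_i^{(t)}$ is simply the raw stochastic gradient and the recursion becomes trivial. Matching the stated constants exactly hinges on this specific choice of $\rho$ rather than on looser generic estimates.
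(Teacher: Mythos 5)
Your proposal is correct and follows essentially the same route as the paper: the identical decomposition of $M_i^{(t)}$ into a measurable part and a zero-mean stochastic part, conditional expectation to kill the cross term, Young's inequality with the same parameter $\rho=\eta/(1-\eta)$ combined with $L_i$-smoothness to obtain the recursion $\mathbb{E}[\lVert M_i^{(t)}\rVert^2]\leq(1-\eta)\mathbb{E}[\lVert M_i^{(t-1)}\rVert^2]+\tfrac{L_i^2}{\eta}\mathbb{E}[\lVert x^{(t)}-x^{(t-1)}\rVert^2]+\eta^2\sigma^2$, and then summation over $t$ and $i$ with independence across workers yielding the $\sigma^2/(n-f)$ noise floor for the averaged version. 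Your explicit attention to the degenerate case $\eta=1$ and to the measurability bookkeeping is a minor refinement the paper leaves implicit, but it does not change the argument.
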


\begin{proof}
	Recall
	\begin{equation*}
		v_i^{(t)}=(1-\eta) v_i^{(t-1)} +  \eta \nabla_x \ell_i(x^{(t)},\xi_i^{(t)})
	\end{equation*}
	and consider
	\begin{equation*}
		\begin{split}
			\lVert M_i^{(t)} \rVert^2  = & \lVert  (1-\eta)(v_i^{(t-1)}-\nabla \mathcal{L}_i(x^{(t)}))  \\
			& \quad + \eta (\nabla_x \ell_i(x^{(t)}_i, \xi_i^{(t)}) - \nabla \mathcal{L}_i(x^{(t)})) \rVert^2. 
		\end{split}
	\end{equation*}
	Taking expectation on both sides and using the law of total expectation, we obtain
	\begin{equation*}
		\begin{split}
			\mathbb{E}[\lVert M_i^{(t)} \rVert^2]  =  & \mathbb{E} \Big[ \mathbb{E}_{\xi_i^{(t)}} \Big[\lVert  (1-\eta)(v_i^{(t-1)}-\nabla \mathcal{L}_i(x^{(t)}))  \\
			&  \quad \quad \quad \,\,+ \eta (\nabla_x \ell_i(x^{(t)}_i, \xi_i^{(t)}) - \nabla \mathcal{L}_i(x^{(t)})) \rVert^2 \Big]\Big]   
		\end{split}
	\end{equation*}
	Because of 
	\begin{equation*}
		\mathbb{E}_{\xi_i^{(t)}} \left[\nabla_x \ell_i(x^{(t)}_i, \xi_i^{(t)}) - \nabla \mathcal{L}_i(x^{(t)}) \right] = 0,
	\end{equation*}
	there holds
	\begin{equation*}
		\begin{split}
		&	\mathbb{E}[\lVert M_i^{(t)} \rVert^2] \\
			& =  (1-\eta)^2\mathbb{E} \left[  \lVert v_i^{(t-1)}-\nabla \mathcal{L}_i(x^{(t)})  \rVert^2 \right]  \\
			& \quad  + \eta^2 \mathbb{E}   \left[\lVert  \nabla_x \ell_i(x^{(t)}_i, \xi_i^{(t)}) - \nabla \mathcal{L}_i(x^{(t)}) \rVert^2 \right]  \\
			& \leq  (1-\eta)^2 (1+a)\mathbb{E} \left[  \lVert M_i^{(t-1)}  \rVert^2 \right]+ \eta^2 \sigma^2 \\
			& \quad   + (1-\eta)^2 \left(1+{a}^{-1} \right)\mathbb{E} \left[  \lVert \nabla \mathcal{L}_i(x^{(t-1)})-\nabla \mathcal{L}_i(x^{(t)})  \rVert^2 \right] .
		\end{split}
	\end{equation*}
	for any $a>0$.
	We take $a=\eta(1-\eta)^{-1}$ and use the $L_i$-smoothness of $\mathcal{L}_i$ to obtain
	\begin{equation*}
		\begin{split}
		&	\mathbb{E}[\lVert M_i^{(t)}  \rVert^2] \\
			&\leq  (1-\eta)\mathbb{E} \left[  \lVert M_i^{(t-1)}  \rVert^2 \right]   + \frac{ L_i^2}{\eta} \mathbb{E} \left[  \lVert x^{(t-1)}-x^{(t)}  \rVert^2 \right] + \eta^2 \sigma^2.
		\end{split}
	\end{equation*}
	Summing the above inequality over all $i\in\mathcal{H}$ and from $t=0$ to $t=T-1$ yields
	\begin{equation*}
		\begin{split}
			&\frac{1}{n-f} \sum_{t=0}^{T-1} \sum_{i\in\mathcal{H}} \mathbb{E}[\lVert M_i^{(t)}  \rVert^2]  \\
		&	\leq \frac{\tilde{L}^2}{\eta^2}\sum_{t=0}^{T-1}\mathbb{E} \left[  \lVert x^{(t+1)}-x^{(t)}  \rVert^2 \right]+ \eta T \sigma^2  \\
		& \quad + \frac{1}{\eta (n-f)} \sum_{i\in\mathcal{H}} \mathbb{E} \left[  \lVert v_i^{(0)}-\nabla \mathcal{L}_i(x^{(0)})  \rVert^2 \right].      
		\end{split}
	\end{equation*}

	Using the same arguments, we obtain
	\begin{equation*}
		\begin{split}
		&	\mathbb{E}[\lVert \overline{v}^{(t)} - \nabla \mathcal{L}_{\mathcal{H}}(x^{(t)}) \rVert^2] \\
			&\leq  (1-\eta)\mathbb{E} \left[  \lVert \overline{v}^{(t-1)}-\nabla \mathcal{L}_{\mathcal{H}}(x^{(t-1)})  \rVert^2 \right]  \\
			& \quad   + \frac{ L^2}{\eta} \mathbb{E} \left[  \lVert x^{(t-1)}-x^{(t)}  \rVert^2 \right] + \frac{\eta^2 \sigma^2}{n-f}
		\end{split}
	\end{equation*}
	and
	\begin{equation*}
		\begin{split}
			&\sum_{t=0}^{T-1}\mathbb{E}[\lVert \overline{v}^{(t)} - \nabla \mathcal{L}_{\mathcal{H}}(x^{(t)}) \rVert^2] \\
			& \leq \frac{L^2}{\eta^2}\sum_{t=0}^{T-1}\mathbb{E} \left[  \lVert x^{(t+1)}-x^{(t)}  \rVert^2 \right]+ \frac{\eta T \sigma^2}{n-f}  \\
			& \quad + \frac{1}{\eta } \mathbb{E} \left[  \lVert \overline{v}^{(0)}-\nabla \mathcal{L}_{\mathcal{H}}(x^{(0)})  \rVert^2 \right].      
		\end{split}
	\end{equation*}
\end{proof}

\subsection{Proof of Theorem \ref{thm:rate}}
\begin{proof}
	By Lemma \ref{lem:descent}, there holds, for any $\gamma \leq 1/(2L)$,
	\begin{equation}\label{eq:descent}
		\begin{split}
		\mathcal{L}_{\mathcal{H}}(x^{(t+1)}) \leq  & \mathcal{L}_{\mathcal{H}}(x^{(t)}) - \frac{\gamma}{2} \lVert  \nabla \mathcal{L}(x^{(t)}) \rVert^2 - \frac{1}{4\gamma} \lVert 
		x^{(t+1)} - x^{(t)}\rVert^2 \\
		& + \frac{\gamma}{2}\lVert g^{(t)} - \nabla \mathcal{L}_{\mathcal{H}}(x^{(t)}) \rVert^2. 
	\end{split}
	\end{equation} 
	Summing the above from $t=0$ to $T-1$ and taking expectation, we have
	\begin{equation}\label{proof:desc}
		\begin{split}
		&	\frac{1}{T} \sum_{t=0}^{T-1} \mathbb{E} \left[ \lVert \nabla \mathcal{L}_{\mathcal{H}}(x^{(t)}) \rVert^2 \right] \\
			& \leq  \frac{2\delta_0 }{\gamma T}- \frac{1}{2\gamma^2 T}  \sum_{t=0}^{T-1} \mathbb{E}\left[ \lVert 
			x^{(t+1)} - x^{(t)}\rVert^2  \right] \\
			& \quad +  \frac{1}{T}\sum_{t=0}^{T-1} \mathbb{E} \left[\lVert g^{(t)} - \nabla \mathcal{L}_{\mathcal{H}}(x^{(t)}) \rVert^2 \right]
		\end{split}
	\end{equation}
	where $\delta_t = \mathbb{E} \left[ \mathcal{L}_{\mathcal{H}}(x^{(t)})-\mathcal{L}_{\mathcal{H}}^* \right]$. We note that
	\begin{equation}\label{eq:bound_Byzantine_error}
		\begin{split}
			& \lVert g^{(t)} - \nabla \mathcal{L}_{\mathcal{H}}(x^{(t)}) \rVert^2  \\
			& \leq  3\lVert g^{(t)}-\overline{g}^{(t)} \rVert^2 +  3\lVert \overline{g}^{(t)} -  \overline{v}^{(t)}\rVert^2 + 3\lVert  \overline{v}^{(t)} - \nabla \mathcal{L}_{\mathcal{H}}(x^{(t)}) \rVert^2 \\
			& \leq {3\lVert g^{(t)}-\overline{g}^{(t)} \rVert^2} + {\frac{3}{n-f} \sum_{i\in\mathcal{H}}\lVert {g}_i^{(t)} -  {v}_i^{(t)}\rVert^2}  \\
			& \quad + {3\lVert  \overline{v}^{(t)} - \nabla \mathcal{L}_{\mathcal{H}}(x^{(t)}) \rVert^2}
		\end{split}
	\end{equation}
	where $\overline{g}= (n-f)^{-1} \sum_{i\in\mathcal{H}} g_i$ and $\overline{v}= (n-f)^{-1} \sum_{i\in\mathcal{H}}v_i$.

	Next, we use the technical lemmas in the previous section to bound the deviation between $g^{(t)}$ and $\nabla \mathcal{L}_{\mathcal{H}}(x^{(t)})$. First, we use Lemma \ref{lem:robust_aggregation} to obtain
	\begin{equation*}
		\begin{split}
		&	\lVert g^{(t)} - \nabla \mathcal{L}_{\mathcal{H}}(x^{(t)}) \rVert^2  \\
			& \leq  \frac{18\kappa(n-f-1)}{(n-f)^2} \sum_{i\in \mathcal{H}} \left( \lVert C_i^{(t)} \rVert^2 + \lVert M_i^{(t)} \rVert^2   \right)   \\
			& \quad + \frac{3}{n-f} \sum_{i\in\mathcal{H}}\lVert C_i^{(t)} \rVert^2  + 3\lVert \widetilde{M}^{(t)} \rVert^2 +  \frac{18\kappa(n-f-1)}{n-f} G^2   \\
			& \leq  \frac{3( 6\kappa +1 )}{n-f} \sum_{i\in\mathcal{H}} \lVert C_i^{(t)} \rVert^2 +   \frac{18\kappa}{n-f} \sum_{i\in \mathcal{H}}  \lVert M_i^{(t)} \rVert^2 + 3\lVert  \widetilde{M}^{(t)} \rVert^2  \\
			& \quad   + {18\kappa} G^2 
		\end{split}
	\end{equation*}
	where $C_i^{(t)}= {g}_i^{(t)} -  {v}_i^{(t)} $, $M_i^{(t)}=  {v}_i^{(t)} - \nabla \mathcal{L}_i(x^{(t)})$, and $\widetilde{M}^{(t)} = \overline{v}^{(t)} - \nabla \mathcal{L}_{\mathcal{H}}(x^{(t)}) $. By summing up the above inequality from $t=0$ to $t=T-1$, we obtain
	\begin{equation*}
		\begin{split}
		&\sum_{t=0}^{T-1} \lVert g^{(t)} - \nabla \mathcal{L}_{\mathcal{H}}(x^{(t)}) \rVert^2  \\
	&	\leq  \frac{ 3(6\kappa +1) }{n-f}   \sum_{t=0}^{T-1}\sum_{i\in\mathcal{H}} \lVert C_i^{(t)} \rVert^2  + \frac{18\kappa}{n-f} 
		\sum_{t=0}^{T-1} \sum_{i\in \mathcal{H}}  \lVert M_i^{(t)} \rVert^2 \\
		& \quad  + 3 \sum_{t=0}^{T-1}\lVert  \widetilde{M}^{(t)} \rVert^2    + 18 \kappa T G^2.
				\end{split}
	\end{equation*}
	Then, by taking expectation and using Lemma \ref{lem:compression_error}, we have
	\begin{equation*}
		\begin{split}
			&\sum_{t=0}^{T-1} \mathbb{E} \left[  \lVert g^{(t)} - \nabla \mathcal{L}_{\mathcal{H}}(x^{(t)}) \rVert^2  \right ]\\
			& \leq  \frac{ 3(6\kappa +1) }{n-f} \sum_{i\in\mathcal{H}}\Big(\frac{8\eta^2}{\alpha^2} \sum_{t=0}^{T-1} \mathbb{E}\left[\lVert   M_i^{(t)}\rVert^2 \right]    + \frac{2 (1-\alpha)  \eta^2 T \sigma^2}{\alpha}     \\
			& \quad \quad \quad   \quad \quad \quad  \quad \quad+ \frac{8\eta^2L_i^2}{\alpha^2} \sum_{t=0}^{T-1} \mathbb{E}\left[\lVert x^{(t+1)}-x^{(t)}\rVert^2 \right] \Big) \\
			& \quad + \frac{18\kappa}{n-f} 
			\sum_{t=0}^{T-1} \sum_{i\in \mathcal{H}} \mathbb{E} \left[   \lVert M_i^{(t)} \rVert^2 \right] + 3 \sum_{t=0}^{T-1} \mathbb{E} \left[ \lVert  \widetilde{M}^{(t)} \rVert^2 \right] \\
			& \quad     + {18 \kappa T} G^2 \\
			& \leq  \left(\frac{24\eta^2(6\kappa+1)}{\alpha^2(n-f)}  + \frac{18\kappa}{n-f}\right) \sum_{t=0}^{T-1} \sum_{i\in \mathcal{H}} \mathbb{E} \left[  \lVert M_i^{(t)} \rVert^2  \right] \\
			& \quad + \frac{24(6\kappa+1)\eta^2\tilde{L}^2}{\alpha^2} \sum_{t=0}^{T-1} \mathbb{E}\left[\lVert x^{(t+1)}-x^{(t)}\rVert^2 \right]  \\
			& \quad + 3 \sum_{t=0}^{T-1} \mathbb{E} \left[ \lVert  \widetilde{M}^{(t)} \rVert^2 \right] +  \frac{6(6\kappa+1)  \eta^2 T \sigma^2}{\alpha} + 18\kappa T G^2 \\
			& \quad  + \frac{24\eta^2(6\kappa+1)}{\alpha^2 (n-f)} \sum_{i\in\mathcal{H}} \mathbb{E} \left[  \lVert M_i^{(0)}  \rVert^2 \right]  
		\end{split}
	\end{equation*}
	Furthermore, by using Lemma \ref{lem:momentum}, we get
	\begin{equation*}
		\scriptsize
		\begin{split}
			&\sum_{t=0}^{T-1} \mathbb{E} \left[ \lVert g^{(t)} - \nabla \mathcal{L}_{\mathcal{H}}(x^{(t)}) \rVert^2   \right] \\
			& \leq  \left(\frac{24\eta^2(6\kappa+1)}{\alpha^2}  + {18\kappa}\right) \biggl(   \frac{\tilde{L}^2}{\eta^2}\sum_{t=0}^{T-1}\mathbb{E} \left[  \lVert x^{(t+1)}-x^{(t)}  \rVert^2 \right]+ \eta T \sigma^2  \\
			& \quad \quad \quad \quad \quad \quad\quad \quad \quad\quad \quad \quad\quad \quad + \frac{1}{\eta (n-f)}  \sum_{i\in\mathcal{H}}\mathbb{E} \left[  \lVert M_i^{(0)}  \rVert^2 \right] \biggl) \\
			& \quad +   \frac{3L^2}{\eta^2}\sum_{t=0}^{T-1}\mathbb{E} \left[  \lVert x^{(t+1)}-x^{(t)}  \rVert^2 \right]+ \frac{3\eta T \sigma^2}{n-f} + \frac{3}{\eta } \mathbb{E} \left[  \lVert \widetilde{M}^{(0)}  \rVert^2 \right] \\
			& \quad + \frac{24\eta^2(6\kappa+1)}{\alpha^2 (n-f)} \sum_{i\in\mathcal{H}} \mathbb{E} \left[  \lVert M_i^{(0)}  \rVert^2 \right]+ \frac{6(6\kappa+1)  \eta^2 T \sigma^2}{\alpha}   \\
			& \quad + \frac{24(6\kappa+1)\eta^2\tilde{L}^2}{\alpha^2} \sum_{t=0}^{T-1} \mathbb{E}\left[\lVert x^{(t+1)}-x^{(t)}\rVert^2 \right] + 18\kappa T G^2   \\
			& \leq  3\tilde{L}^2\left(\frac{8(6\kappa+1)}{\alpha^2} + \frac{6\kappa}{\eta^2} + \frac{L^2}{\eta^2 \tilde{L}^2} + \frac{8(6\kappa+1)\eta^2}{\alpha^2} \right) \sum_{t=0}^{T-1} \mathbb{E}\left[\lVert x^{(t+1)}-x^{(t)}\rVert^2 \right] \\
			& \quad + 3\eta T \left( \frac{8\eta^2 (6\kappa +1) }{\alpha^2} +6\kappa + \frac{1}{n-f} + \frac{2(6\kappa+1)\eta}{\alpha}\right) \sigma^2 + 18\kappa TG^2 \\
			& \quad + \frac{3}{\eta}  \mathbb{E} \left[  \lVert \widetilde{M}^{(0)}  \rVert^2 \right] + \left(\frac{24\eta(6\kappa+1)(1+\eta)}{(n-f)\alpha^2}  + \frac{18\kappa}{\eta(n-f)}\right) \sum_{i\in\mathcal{H}} \mathbb{E} \left[  \lVert M_i^{(0)}  \rVert^2 \right]
		\end{split}
	\end{equation*}
	Plugging the above relation into \eqref{proof:desc} leads to
	\begin{equation*}
		\small
		\begin{split}
		&	\mathbb{E} \left[ \lVert \nabla \mathcal{L}_{\mathcal{H}}(\hat{x} ^{(T)}) \rVert^2 \right]  \\
			& \leq  \frac{2\delta_0 }{\gamma T}- \frac{P_1}{T} \sum_{t=0}^{T-1} \mathbb{E} \left[ \lVert 
			x^{(t+1)} - x^{(t)}\rVert^2 \right] \\
			& \quad + 3\eta  \left( \frac{8\eta^2 (6\kappa +1) }{\alpha^2} +6\kappa + \frac{1}{n-f} + \frac{2(6\kappa+1)\eta}{\alpha}\right) \sigma^2 + 18\kappa G^2 \\
			& \quad  +  \frac{1}{T(n-f)}\left(\frac{24\eta(6\kappa+1)(1+\eta)}{\alpha^2}  + \frac{18\kappa}{\eta}\right) \sum_{i\in\mathcal{H}} \mathbb{E} \left[  \lVert M_i^{(0)}  \rVert^2 \right] \\
			& \quad + \frac{3}{\eta T}  \mathbb{E} \left[  \lVert \widetilde{M}^{(0)}  \rVert^2 \right]
		\end{split}
	\end{equation*}
	where $\hat{x}^{(T)}$ is sampled uniformly at random from $T$ iterates and
	\begin{equation*}
		\begin{split}
			P_1
			&= \frac{1}{\gamma^2 } \biggl( \frac{1}{2}- \frac{24(6\kappa+1)\gamma^2\tilde{L}^2}{\alpha^2} - \frac{18 \kappa \gamma^2 \tilde{L}^2}{\eta^2} \\
			& \quad \quad\quad \quad - \frac{3 \gamma^2L^2}{\eta^2} - \frac{24(6\kappa+1)  \gamma^2 \tilde{L}^2 \eta^2}{\alpha^2}\biggl)       \\
			& \overset{(i)}{\geq}  \frac{1}{\gamma^2 } \left( \frac{1}{2}- \frac{48(6\kappa+1)\gamma^2\tilde{L}^2}{\alpha^2} - \frac{3\gamma^2(6 \kappa \tilde{L}^2+L^2)}{\eta^2}  \right)   \\
			&\overset{(ii)}{\geq} 0
		\end{split}
	\end{equation*}
	where $(i)$ and $(ii)$ are due to $\eta\leq 1$ and the assumption on step-size, respectively. We proved \eqref{eq:bound}.

	Finally, by using the choice of the momentum parameter
	\begin{equation*}
		\begin{split}
		\eta \leq \min\biggl\{ &  \left( \frac{L\delta_0\alpha^2}{24(1+6\kappa )\sigma^2 T} \right)^{\nicefrac{1}{4}}, \left( \frac{L\delta_0\alpha}{6(1+6\kappa)\sigma^2 T} \right)^{\nicefrac{1}{3}}, \\
	&	\left( \frac{L\delta_0(n-f)}{3\sigma^2 T} \right)^{\nicefrac{1}{2}} ,\left( \frac{L\delta_0}{18\kappa\sigma^2 T} \right)^{\nicefrac{1}{2}} 
	\biggl\}
			\end{split}
	\end{equation*}
	we obtain
	\begin{equation*}
		\begin{split}
		&	\mathbb{E} \left[ \lVert \nabla \mathcal{L}_{\mathcal{H}}(\hat{x} ^{(T)}) \rVert^2 \right]  \\
		&	\leq  \left( \frac{(24(1+6\kappa))^{\nicefrac{1}{3}}L\delta_0\sigma^{\nicefrac{2}{3}}}{\alpha^{\nicefrac{2}{3}} T} \right)^{\nicefrac{3}{4}}   + \left( \frac{( 6(1+6\kappa))^{\nicefrac{1}{2}}L\delta_0 \sigma}{\alpha^{\nicefrac{1}{2}}T} \right)^{\nicefrac{2}{3}} \\
		& \quad +  \left( \frac{18\kappa L\delta_0  \sigma^2}{T} \right)^{\nicefrac{1}{2}} +  \left( \frac{3L\delta_0 \sigma^2}{(n-f)T} \right)^{\nicefrac{1}{2}}  + 18\kappa G^2+ \frac{\Gamma_0 }{\gamma T} . 
		\end{split}
	\end{equation*}
\end{proof}

\bibliography{robustDL}
\bibliographystyle{IEEEtran}

\vfill

\end{document}